\newcommand{\htheta}{\hat{\theta}}
\newcommand{\ruleset}{M}
\renewcommand\vec{\boldsymbol}
\begin{document}

\title{Probabilistic Truly Unordered Rule Sets}

\author{\name Lincen Yang \email l.yang@liacs.leidenuniv.nl \\
       \addr LIACS, 
       Leiden University\\
       Niels Bohrweg 1, 2333 CA, Leiden, The Netherlands
       \AND
       \name Matthijs van Leeuwen \email m.van.leeuwen@liacs.leidenuniv.nl \\
       \addr LIACS, 
       Leiden University\\
              Niels Bohrweg 1, 2333 CA, Leiden, The Netherlands}

\editor{My editor}

\maketitle

\begin{abstract}

Rule set learning has recently been frequently revisited because of its interpretability. Existing methods have several shortcomings though. First, most existing methods impose orders among rules, either explicitly or implicitly, which makes the models less comprehensible. Second, due to the difficulty of handling conflicts caused by overlaps (i.e., instances covered by multiple rules), existing methods often do not consider \emph{probabilistic} rules. Third, learning classification rules for multi-class target is understudied, as most existing methods focus on binary classification or multi-class classification via the ``one-versus-rest" approach. 

To address these shortcomings, we propose TURS, for Truly Unordered Rule Sets. To resolve conflicts caused by overlapping rules, we propose a novel model that exploits the probabilistic properties of our rule sets, with the intuition of only allowing rules to overlap if they have similar probabilistic outputs. We next formalize the problem of learning a TURS model based on the MDL principle and develop a carefully designed heuristic algorithm. We benchmark against a wide range of rule-based methods and demonstrate that our method learns rule sets that have lower model complexity and highly competitive predictive performance. In addition, we empirically show that rules in our model are empirically ``independent" and hence truly unordered.

\end{abstract}

\begin{keywords}
Rule sets, Probabilistic rules, MDL principle, Interpretable machine learning, Multi-class Classification
\end{keywords}

\section{Introduction} \label{sec:intro}
Despite the great success of black-box models in a wide range of tasks, intrinsically interpretable machine learning models have also received a lot of attention due to their transparency and hence their applicability to sensitive real-world scenarios, such as health care and judicial systems~\citep{Rudin2019StopExp}. We particularly focus on modelling and learning probabilistic rule sets for multi-class classification. 

A probabilistic rule is in the form of \textbf{IF $X$ meets certain conditions, THEN $P(Y) = \hat{P}(Y)$}, in which $X$ represents the feature variables, $Y$ the target variable, and $\hat{P}$ the associated class probability estimator. 

Rule-based methods have the unique advantage that they are not only accessible and interpretable to statisticians and data scientists but also to domain experts, since rules can be directly read. While a single rule summarizes a local pattern from the data and hence only describes a subset of the instances, existing rule-based methods adopt various approaches to put individual rules together to form a global predictive model. 

For instance, rule lists (or decision lists)~\citep{furnkranz2012foundations} connect all individual rules by the ``\textbf{IF ... (possibly multiple) ELSE IF ... ELSE ...}" statement, which is equivalent to specifying an explicit order for each rule. This approach is compatible with the very efficient divide-and-conquer algorithms: when a rule is induced from data, the covered instances (i.e., instances satisfying the condition of the rule) can be removed and hence iteratively simplify the search space. While this approach is very efficient, it comes at the cost of interpretability. As the condition of each rule depends on all preceding rules, comprehending a single rule requires going over (the negations of) all preceding rules' conditions, which is impractical when the rule list becomes large. 

On the other hand, rule set models put rules together without specifying explicit orders. 
In this case, an instance can be covered by one single rule or simultaneously by multiple rules. When the instances covered by two or more rules have intersections, we say that these rules \emph{overlap}.
Although existing rule set methods claim that individual rules in rule sets are unordered~\citep{clark1991cn2Improve, matthijs2012diverse, kotsiantis2013decision}, we argue that they are not truly unordered. In fact, when one instance is covered by multiple rules at the same time, different rules may give conflicting (probabilistic) predictions. As a result, ad-hoc schemes are widely used to resolve the conflicts, typically by ranking the involved rules with certain criteria (e.g., accuracy) and always selecting the highest ranked rule~\citep{zhang2020diverseRuleSets,lakkaraju2016interpretable}. This approach, however, imposes implicit orders among rules, making rules entangled instead of truly unordered. 

Implicit orders in rule sets severely harm interpretability, especially from the perspective of comprehensibility. While no agreement has been reached on the precise definition of interpretability of machine learning models \citep{murdoch2019interpretable,molnar2020interpretable}, we specifically treat interpretability with domain experts in mind. In particular, to explain a single prediction for an instance to domain experts when implicit orders exist, it is insufficient to only provide the rules that the instance satisfies, because other \emph{higher-ranked} rules that the instance does \emph{not} satisfy are also a necessary part of the explanation. For example, imagine a patient is predicted to have \emph{Flu} because they have \emph{Fever}. If the model also contains the higher-ranked rule \emph{``Blood in stool $\rightarrow$ Dysentery"}, the explanation should include the fact that \emph{``Blood in stool"} is not true, because otherwise the prediction would change to \emph{Dysentery}. If the model contains many rules, however, it becomes impractical to go over all higher-ranked rules for each prediction. 

Additionally, decision trees, which can broadly be viewed as a rule-based approach, often have rules (root-to-leaf paths) that share multiple attributes due to their inherent structure.  This can result in overly lengthy rules (as we will also empirically demonstrate in the Experiment section), since some internal nodes may not contribute to the classification itself but only serve to maintain the tree structure. Thus, decision trees are often less compact than decision rules, and consequently it is more difficult for domain experts to grasp the internal decision logic, and hence also the explanations for single predictions.

Given these shortcomings of existing rule-based models, we introduce \emph{truly unordered rule sets (TURS)}, with the following properties. First, unlike most recently proposed rule sets/lists methods that only predict labels as outputs~\citep{wang2017bayesian, yang2017scalable-bayesian-rule-list, dash2018boolean, yang2021learning}, we aim for formalizing a set of rules as a probabilistic model in a principled way. Since rule-based methods are potentially most applicable in sensitive areas, probabilistic predictions are much more useful for decision making and knowledge discovery, especially when domain experts are responsible for taking actions, such as in health care. Probabilistic rules also allow us to directly apply our model for multi-class classification tasks, without leveraging the commonly used ``one-versus-rest" paradigm~\citep{clark1991cn2Improve,huhn2009furia}. Second, we aim to develop a method to learn a set of probabilistic rules without implicit orders: to achieve this, we ``allow" rules to overlap only if they have similar probabilistic outputs. In this case, when one instance is covered by multiple rules, it does not matter much even if we randomly pick one of these rules for prediction, since the differences of the prediction given by each individual rule is controlled. Thus, each rule becomes ``self-standing" and can be used for explaining the predictions alone. 

Particularly, we formally define a \emph{truly unordered rule set} (TURS) as a probabilistic model, i.e., given a TURS model denoted as $M$ and a dataset $D$, the likelihood of the target values conditioned on the feature values is defined. Notably, without putting implicit orders among rules, instances covered by multiple rules are modelled in a subtle manner such that the resulting likelihood is ``penalized" if these overlapping rules have very different probabilistic outputs. 
Thus, we leverage our formal definition of TURS model and incorporate the probabilistic output differences into the goodness-of-fit of our probabilistic model, \emph{without} introducing any hyper-parameter to control the probabilistic output differences of overlapping rules. Further, we treat the problem of learning a TURS model from data as a probabilistic model selection task, and hence further design a model selection criterion based on the minimum description length (MDL) principle~\citep{grunwald2019minimum, grunwald2007minimum}, which does not require a regularization parameter to be tuned. 

We resort to heuristics for optimization as the search space combined with the model selection criterion do not allow efficient search. Yet, we carefully and extensively extend the common heuristic approach for learning decision rules from data, in the following aspects. First, we consider a ``learning speed score" heuristic, i.e., the decrease of our optimization score (to be minimized) \emph{per extra covered instance} as the quality measure for searching the next ``best" rule. 
Second, we take a novel beam search approach, such that 1) the degree of ``patience" is considered by using a diverse beam search approach, and 2) an auxiliary beam together with a ``complementary" score is proposed, in order to resolve the challenge that rules that have been added to the rule set may become obstacles for new rules. This challenge comes along with the fact that, unlike existing rule set methods, we \emph{do} consider overlaps of rules in the process of learning rules from data.
Third, we propose an \emph{MDL-based local testing method} in order to characterize whether the ``left out" instances during the process of refining a rule can be well covered by rules we search for later. That is, while existing heuristics in rule learning only characterize the ``quality" of the individual rules in different ways, our local testing criterion can be regarded as a look-ahead strategy.


In summary, our main contributions are as follows:
\begin{enumerate}
    \item In contrast to most recently proposed rule lists/sets methods that focus on non-probabilistic modelling and binary classification, we propose a principled way to formalize rule sets as probabilistic models that arguably provides more transparency and uncertainty information to domain experts in sensitive areas such as health care. It can also handle multi-class classification naturally, without resorting to the one-versus-rest scheme. 
    \item While existing rule sets methods adopt an ad-hoc approach to deal with conflicts caused by overlaps, often by always following the rule that scores the best according to a pre-defined criterion (e.g., accuracy or F-score), we identify that this approach puts implicit orders among rules that can severely harm interpretability. 
    To tackle this issue, we propose to only ``allow" overlaps that are formed by rules with similar probabilistic outputs. We formally define the TURS model, for Truly Unordered Rule Sets, in a way such that the probabilistic output difference among overlapping rules is incorporated in the goodness-of-fit as measured by the likelihood. 
    \item We formalize the problem of learning a TURS model from data as a probabilistic model selection task. We further propose an MDL-based model selection criterion that automatically handles the trade-off between the goodness-of-fit and model complexity, without any hyper-parameters to be tuned by the time-consuming cross-validation. 
    \item We develop a heuristic optimization algorithm with  considerable algorithmic innovations. We benchmark our model TURS together with the proposed algorithm with extensive empirical comparisons against a wide range of rule-based methods. We show that TURS has superior performance in the following aspects: 1) it has very competitive predictive performance (measured by ROC-AUC); 2) it can \emph{empirically} learn truly unordered rules: the probabilistic conflicts caused by overlaps are negligible, in the sense that the influence is little even if we predict for instances covered by multiple rules by randomly picking one single rule from these rules; 3) TURS learns a set of rules with class probability estimates that can generalize well to unseen data; and 4) it produces simpler models in comparison to competitor algorithms. 
\end{enumerate}

\smallskip
\noindent \textbf{Comparison with our previous work.} This paper is based on our previous work~\citep{yang2022truly}, with vast extensions and modifications in all components, including probabilistic modelling, model selection criterion, algorithmic approach, and experiments. We summarize the key difference points between this paper and our previous work as follows. First of all, we developed a completely new algorithm, with 1) a learning-speed-score heuristic motivated by the ``normalized gain" used in the CLASSY algorithm for rule lists~\citep{proencca2020interpretable}; 2) a diverse beam search approach with diverse ``patience", in which the concept of patience is taken from the PRIM method~\citep{friedman1999bump} for regression rules; 3) an innovative extension to the normal beam search approach, in the sense that we propose to use an auxiliary beam together with the ``main" beam (and hence we simultaneously keep two beams); and 4) an MDL-local-test that serves as a look-ahead strategy for instances that are not covered for now. 
Further, we substantially extend the experiments in various aspects, and we now demonstrate that we can empirically treat the rule sets induced from data as truly unordered, in the sense that if a instance is covered by multiple rules we can now randomly pick one single rule for prediction, with negligible influence on the predictive performance (measured by ROC-AUC). Lastly, we also make a moderate modification to our optimization score. We discuss all these differences more in detail in the Appendix. 




\smallskip
\noindent \textbf{Organization of the paper.} The remainder of the paper is structured as follows. In Section~\ref{sec:related} we review related work. In Section~\ref{section:rulesetModel} we present how to formalize a rule set as a probabilistic model, with the key component of how to model the instances covered by overlaps, i.e., by multiple rules at the same time. In Section~\ref{sec:model_selection}, we discuss our model selection approach for learning a the truly unordered rule set, and formally define the model selection criterion based on the minimum description length (MDL) principle. In Section~\ref{sec:alg}, we motivate and discuss our heuristics for learning the rule sets, and next present our proposed algorithm. Finally, we discuss our experiment setup and demonstrate our experiment results in Section~\ref{sec:exp}.

\begin{table}[ht]
\centering
\scalebox{0.63}{
\begin{tabular}{|c|c|c|c|c|}
\hline
Algorithm & Model type & Rule learning strategy & Probabilistic & Handle overlap conflicts \\
\hline
CBA & ordered rule list & divide and conquer & $\checkmark$ & explicit order \\
CN2-ordered & ordered rule list & divide and conquer & $\checkmark$ & explicit order \\
PART & ordered rule list & divide and conquer & $\checkmark$ & explicit order \\
CLASSY & ordered rule list & divide and conquer & $\checkmark$ & explicit order \\
RIPPER & ordered list of rule sets & divide and conquer & $\times$ & explicit order \\
C4.5 rules & ordered list of rule sets & one-versus-rest & $\times$ & explicit order \\
\hline
BRS & rule set (binary target) & rules for positive class only & $\times$ & no conflict \\
CG & rule set (binary target)& rules for positive class only & $\times$ & no conflict \\
Submodular & rule set (binary target) & rules for positive class only & $\times$ & no conflict \\
CN2-unordered & rule set & one-versus-rest & $\checkmark$ & ad-hoc (weighted average) \\
FURIA & fuzzy rule set & one-versus-rest & $\checkmark$ & fuzzy (weighted average) \\
CMAR & rule set & association rule mining & $\times$ & ad-hoc (implicit orders, $\chi^2$) \\
CPAR & rule set & association rule mining & $\times$ & ad-hoc (implicit orders, accuracy) \\
IDS & rule set & optimization for multi-class target & $\times$ & ad-hoc (implicit orders, F1-score) \\
DRS & rule set & optimization for multi-class target & $\times$ & ad-hoc (implicit orders, accuracy) \\
\hline
\textbf{TURS} (ours) & \textbf{truly unordered rule set} & \textbf{optimization for multi-class target}  & $\checkmark$ & \textbf{Not needed}\\
\hline
\end{tabular}
}
\caption{Summary of the algorithms' key properties.}
\label{table:relatec}
\end{table}
\section{Related Work} \label{sec:related}

We next review the related work and we categorize them as follows. First, we discuss rule list methods, in which no overlap among rules exists by definition. Second, we review previous methods that learn rules for a single class labels, and based on it, the one-versus-rest rule learning methods. 
Last, we discuss rule sets methods for multi-class targets, as well as several different but related methods such as association rule mining. We summarize the key properties of closely related methods in Table~\ref{table:relatec}.


\noindent
\textbf{Rule lists.}
Rules in a rule list are connected by \textsc{if-then-else} statements, and hence are with explicit orders. When classifying an instance, rules in the rule list are checked sequentially: once a rule is found of which the condition is satisfied by the instance, that single rule is used for prediction. Existing methods include CBA~\citep{liu1998CBA}, ordered CN2~\citep{clark1989cn2}, PART~\citep{frank1998generating}, and the more recently proposed CLASSY~\citep{proencca2020interpretable} and Bayesian rule list~\citep{yang2017scalable-bayesian-rule-list}. Although these methods are often efficient by leveraging the divide-and-conquer (i.e., sequential covering) approach, rule lists are more difficult to interpret than rule sets because of their explicit orders. To comprehend the conditions of each rule, conditions in all preceding rules must also be taken into account; thus, the condition of each individual rule may not be meaningful when domain experts examine it separately (except for the first one). 

\smallskip \noindent
\textbf{One-versus-rest rule learning.}
This category focuses on only learning rules for a single class label, i.e., the ``positive" class, which is already sufficient for binary classification \citep{wang2017bayesian,dash2018boolean,yang2021learning,quinlan1990learning}. That is, if an instance satisfies at least one of the induced rules, it can be classified as ``positive", and otherwise negative. As all rules output the ``positive" class, no prediction conflicts exist by definition. Recently, this line of research focuses on adopting discrete optimization techniques with provably better theoretical properties than heuristic algorithms; however, they suffer from the following drawbacks. First, these methods are non-probabilistic by definition, and hence it is not clear how to estimate the class probability for the instances covered by multiple rules (i.e., in the overlap). Second, no explicit explanation exists for those instances that are classified into the ``negative" class; instead, the explanations for the negative class depend on the negation of all rules for the positive class, which can be overly complicated to comprehend when the number of rules is large. Third, these methods require discretizing and binarizing the feature matrix, and hence can only afford rather coarse search granularity for continuous-valued features, due to the high memory cost. 

Further, learning rules for a single class label can be extended to multi-class classification, through the one-versus-rest paradigm. Existing methods mostly take the following two approaches to achieve this. The first, taken by RIPPER~\citep{cohen1995ripper} and the C4.5 decision rule method~\citep{quinlan2014c4}, is to learn each class in a certain order. After all rules for a single class have been learned, all covered instances are removed (or those with this class label). The resulting model is essentially an ordered list of rule sets, and hence is more difficult to interpret than a rule set.

The second approach does no impose an order among the classes; instead, it learns a set of rules for each class against all other classes. The most well-known are unordered-CN2 and FURIA~\citep{clark1991cn2Improve,huhn2009furia}. 
FURIA avoids dealing with conflicts of overlaps by essentially using all (fuzzy) rules for predicting unseen instances; i.e., the rules' outputs are weighted by the so-called ``certainty factor". As a result, it cannot provide a single rule to explain its prediction. Unordered-CN2, on the other hand, handles overlaps by estimating the class probability as the weighted average of the class probability estimates for all individual rules involved in the overlap. That is, unlike our method, CN2 adopts a post-hoc conflict resolving scheme, and as a result the issue of probabilistic conflicts is ignored during the training phase of CN2. 

\smallskip \noindent
\textbf{Multi-class rule sets.}
Very few methods exist for formalizing learning rules for multi-class classification as an optimization task directly (like our method), which leads to algorithmically more challenging tasks than the one-versus-rest paradigm, as the separate-and-conquer strategy is not applicable. To the best of our knowledge, the only existing methods are IDS~\citep{lakkaraju2016interpretable} and DRS~\citep{zhang2020diverseRuleSets}. Both are neither probabilistic nor truly unordered. To handle conflicts of overlaps, IDS follows the rule with the highest F1-score, and DRS uses the most accurate rule. As we elaborated in Section~\ref{sec:intro}, this approach imposes implicit orders and thus harms the comprehensibility of the model. 

\smallskip \noindent
\textbf{Decision trees and association rules.}
Other related approaches include the following. To begin with, decision tree based methods such as CART~\citep{breiman1984classification} and C4.5~\citep{quinlan2014c4} produce rules that are forced to share many ``attributes" and hence are longer than necessary, as we will empirically demonstrate in Section~\ref{sec:exp}.

Besides, a large category of methods is associative rule classification, which is to build rule-based classifiers based on existing association rule mining algorithms~\citep{abdelhamid2014associative}. 
Association rule mining is known to have the problem of inducing redundant rules~\citep{chen2006new}, hence a single instance can be easily covered by potentially many rules at the same time. As a result, various ad-hoc schemes have been proposed for handling the prediction conflicts of rules. 

For instance, CMAR~\citep{li2001cmar} first groups rules based on their (different) predicted class labels for a given instance, and next constructs a contingency table for the whole dataset based on 1) whether an instance is covered by the group of rules and 2) the class label of each instance. Then the group of rules (and hence the conflicting class labels) is ranked with the $\chi^2$ statistic. Moreover, CPAR~\citep{yin2003cpar} extends the sequential covering approach taken by FOIL~\citep{quinlan1990learning}: instead of removing covered instances, the weights of covered instances are downgraded, in order to guide the search algorithm to focus on uncovered instances, and then resolves the prediction conflicts based on ranking the rules with the expected accuracy. 

Lastly, the `lazy learning' approach for associative rule classification, which focuses on learning a single rule for every test (unseen) instance separately with a given training set of instances, can also avoid the conflicts of overlaps~\citep{veloso2006lazy}. As a result, the lazy learning approach will not construct a global rule set model that describes the whole dataset, and hence provide less transparency for domain experts than our method.

\section{Truly Unordered Rule Sets}\label{section:rulesetModel}

We first formalize individual rules as \emph{local} probabilistic models, and then define rule sets as \emph{global} probabilistic models. The key challenge lies in how to define $P(Y=y|X=x)$ for an instance $(x,y)$ that is covered by multiple rules. 

\subsection{Probabilistic rules}
Denote the input random variables by $X = (X_1, \ldots, X_d)$, where each $X_i$ is a one-dimensional random variable representing one dimension of $X$, and denote the categorical target variable by $Y$ together with its domain $\mathscr{Y}$ that contains all unique class labels. Further, denote the dataset from which the rule set can be induced as $D = \{(x_i, y_i)\}_{i \in [n]}$, or $(x^n, y^n)$ for short. Each $(x_i,y_i)$ is an instance. Then, a probabilistic rule $S$ is written as
\begin{equation}
(X_1 \in R_1 \land X_2 \in R_2 \land \ldots) \rightarrow P_S(Y),
\end{equation}
where each $X_i \in R_i$ is called a \emph{literal} of the \emph{condition} of the rule. Specifically, each $R_i$ is an interval (for a quantitative variable) or a set of categorical levels (for a categorical variable). 

A probabilistic rule of this form describes a subset $S$ of the full sample space of $X$, such that for any $x \in S$, the conditional distribution $P(Y | X=x)$ is approximated by the probability distribution of $Y$ conditioned on the event $\{X \in S\}$, denoted as $P(Y | X \in S)$. Since in classification $Y$ is a discrete variable, we can parametrize $P(Y|X\in S)$ by a parameter vector $\vec{\beta}$, in which the $j$th element $\beta_j$ represents $P(Y=j|X\in S)$, for all $j \in \mathscr{Y}$. We therefore denote $P(Y | X \in S)$ as $P_{S, \vec{\beta}}(Y)$, or $P_S(Y)$ for short. To estimate $\vec{\beta}$ from data, we adopt the maximum likelihood estimator, denoted as $P_{S, \hat{\vec{\beta}}}(Y)$, or $\hat{P}_S(Y)$ for short.

Further, if an instance $(x,y)$ satisfies the condition of rule $S$, we say that $(x,y)$ is \emph{covered} by $S$. Reversely, the \emph{cover} of $S$ denotes the instances it covers. When clear from the context, we use $S$ to \emph{both represent the rule itself and/or its cover}, and define the number of covered instances $|S|$ as its \emph{coverage}.

\subsection{The TURS model} \label{subsec:model_def}
We aim for defining a rule set model with the following properties. First, each individual rule can be regarded as a reliable local pattern and generalizable probabilistic model that can serve as an explanation for the model's predictions. Second, if certain rules overlap with each other, i.e., some instances are covered by multiple rules simultaneously, then the probabilistic outputs of these rules ``must be similar enough", in the sense that the likelihood of a TURS model given a fixed dataset incorporates (and penalizes) the differences of probabilistic outputs of overlapping rules. 

Given a rule set with $K$ individual rules, denoted as $\ruleset$ = $\{S_i\}_{i \in [K]}$, any instance $(x,y)$ falls into one of three cases: 1) exactly one rule covers $x$; 2) at least two rules cover $x$; and 3) no rule in $M$ covers $x$. We formally define the model $M$ as follows. 

\smallskip
\noindent \textbf{Covered by one rule only.} Given a single rule denoted as $S$, when $x \in S, S \in M$ and $x \notin M \setminus S$, we define 
\begin{equation} \label{eq:def_rule_1}
    P(Y|X=x) = P(Y|X \in S) = P_S(Y),  \,\,\, \forall x \in S, x \notin M \setminus S 
\end{equation}
in which $P_S(Y)$ can be estimated from data. 
That is, we use $P_S(Y)$ to ``approximate" the conditional probability $P(Y|X=x)$. To estimate $P_S(Y)$ we adopt the maximum likelihood (ML) estimator based on all instances covered by $S$. We define the ML estimator as $\hat{P}_S(Y)$, and let 
\begin{equation} \label{eq:ml_est_rule}
\hat{P}_S(Y = j) = \frac{|\{(x,y): x \in S, y = j\}|}{|S|}, \forall j \in \mathscr{Y}. 
\end{equation}

Note that we intentionally \emph{do not exclude} instances in $S$ that are also covered by other rules (i.e., in overlaps) for estimating $P_S(Y)$. Hence, the probability estimation for each rule is independent of other rules; as a result, each rule is \emph{self-standing}, which forms the foundation of a truly unordered rule set. 

\smallskip
\noindent \textbf{Covered by multiple rules.} For the second case when $x \in \bigcap_{i \in I} S_i, I \subseteq [K]$, we define
\begin{equation} \label{eq:def_rule_2}
    P(Y|X=x) = P(Y|X\in \bigcup_{i \in I} S_i), \,\,\, \forall x \in \bigcap_{i \in I} S_i, I \subseteq [K]
\end{equation}
in which $P(Y|X\in \bigcup_{i \in I} S_i)$ is to be estimated from data with the ML estimator, defined and denoted as
\begin{equation} \label{eq:ml_est_overlap}
    \hat{P}(Y = j | X \in \bigcup_{i \in I} S_i) = \frac{|\{(x,y): x \in \bigcup_{i \in I} S_i, y = j\}|}{|\bigcup_{i \in I} S_i|}, \forall j \in \mathscr{Y}. 
\end{equation}

Note that we take the union $\bigcup_{i \in I} S_i$ for the instances covered by the overlap (i.e., intersection) $\bigcap_{i \in I} S_i$. As counter-intuitive as it may seem at first glance, this subtle definition plays a key role in our modelling: with this novel definition, the likelihood of the data given the model---as the measure of the model's goodness-of-fit---automatically incorporates the differences between the rules' probabilistic outputs if they form an overlap. 

Without loss of generalization, consider two rules denoted as $S_i$ and $S_j$. When $P_{S_i}(Y)$ and $P_{S_j}(Y)$ are very similar, the conditional probability conditioned on the event $\{S_i \cup S_j\}$, denoted as $P(Y|S_i \cup S_j)$, will also be similar to both $P_{S_i}(Y)$ and $P_{S_j}(Y)$. In this case, it does not matter which of these three (i.e., $P_{S_i}(Y)$, $P_{S_j}(Y)$, or $P(Y|S_i \cup S_j)$) we use to model $P(Y|X=x), \forall x \in S_i \cap S_j$, in the sense that the ``goodness-of-fit" measured by the likelihood of all instances covered by the overlap $S_i \cap S_j$ would be all similar. 

On the other hand, when $P_{S_i}(Y)$ and $P_{S_j}(Y)$ are very different, the goodness-of-fit would be poor when using $P(Y|S_i \cup S_j)$ for estimating $P(Y|X=x)$ for $x \in S_i \cap S_j$. Thus, we leverage this property to penalize ``bad" overlaps by incorporating the probabilistic goodness-of-fit in our model selection criterion that will be discussed in detail in Section~\ref{sec:model_selection}. 


\smallskip
\noindent \textbf{Covered by no rule.} When no rule in $M$ covers $x$, we say that $x$ belongs to the so-called ``else rule'' that is part of every rule set and equivalent to $x \notin \bigcup_{i} S_i$. Thus, we approximate $P(Y|X=x)$ by $P(Y | X \notin \bigcup_{i} S_i)$. We denote the else rule by $S_0$, 
which is the only rule in every rule set that depends on the other rules and is therefore not self-standing; however, it will also have no overlap with other rules by definition.

\smallskip
\noindent \textbf{TURS as a probabilistic model. }
Building upon our definition for modelling the conditional class probability and the maximum likelihood estimators, we can now formally define truly unordered rule sets as probabilistic models. Formally, a rule set $\ruleset$ as a probabilistic model is a family of probability distributions, denoted $P_{\ruleset, \theta}(Y|X)$ and parametrized by $\theta$. Specifically, $\theta$ is a parameter vector representing all necessary probabilities of $Y$ conditioned on events $\{X \in G\}$, where $G$ is either a single rule (including the else-rule) or the union of multiple rules. $\theta$ is estimated from data by estimating each $P(Y|X \in G)$. 

The resulting estimated vector is denoted as $\htheta$ and contains $\hat{P}(Y|X \in G)$ for all ${G \in \mathscr{G}}$, where $\mathscr{G}$ consists of all individual rules and the unions of overlapping rules in $M$. 
To simplify the notation, we denote $(x,y) \vdash G$, for the following two cases:
1) when $G$ is a single rule (including the else rule), then $(x, y) \vdash G \iff x \in G$; and 2) when $G$ is a union of multiple rules, $G = \bigcup S_i$, then $(x, y) \vdash G \iff x \in \bigcap S_i$. By assuming the dataset $D = (x^n, y^n)$ to be i.i.d., we have
\begin{equation}
P_{\ruleset, \theta}(y^n|x^n) = \prod_{G \in \mathscr{G}} \prod_{(x,y) \vdash G} P(Y = y | X \in G).
\end{equation}

\subsection{Predicting for a new instance}
When an unseen instance $x'$ comes in, we predict $P(Y|X=x')$ depending on whether $x'$ is covered by one rule, multiple rules, or no rule. An important question is whether we always need access to the training data, i.e., whether the probability estimates we obtain from the training data points are sufficient for predicting $P(Y|X=x')$, especially when $x'$ is covered by multiple rules by which no instance in the training data is covered simultaneously. 

For instance, if $x'$ is covered by two rules $S_i$ and $S_j$, $P(Y|X=x')$ is then predicted as $\hat{P}(Y|X \in S_i \cup S_j)$. However, if there are no training data points covered both by $S_i$ and $S_j$, then we would not obtain $\hat{P}(Y|X \in S_i \cup S_j)$ in the training phase. Nevertheless, in this case we have  $|S_i \cup S_j| = |S_i| + |S_j|$, and hence
\begin{equation}
    \hat{P}(Y|X \in S_i \cup S_j) = \frac{|S_i| \hat{P}(Y|X\in S_i) + |S_j| \hat{P}(Y|X\in S_j)}{|S_i| + |S_j|}.
\end{equation}

By contrast, when $x'$ is covered by one rule only or no rule, the corresponding class probability estimation is already obtained during the training phase. Thus, we conclude that access to the training data is not necessary for prediction.

\section{Rule Set Learning as Probabilistic Model Selection}
\label{sec:model_selection}

Exploiting the formulation of rule sets as probabilistic models, we define the task of learning a rule set as a probabilistic model selection problem. Specifically, we use the minimum description length (MDL) principle for model selection. 

The MDL principle is one of the best off-the-shelf model selection methods and has been widely used in machine learning and data mining \citep{grunwald2019minimum, galbrun2022minimum}. Although rooted in information theory, it has been recently shown that MDL-based model selection can be regarded as an extension of Bayesian model selection \citep{grunwald2019minimum}. 

The principle of MDL-based model selection is to pick the model, such that the code length (in bits) needed to encode the data given the model, together with the model itself, is minimized. We begin with discussing Normalized Maximum Likelihood (NML) distributions for calculating the bits for encoding the data given the model, followed by the calculation of the code length for encoding the model itself. 

\subsection{Normalized Maximum Likelihood Distributions for Rule Sets}

As the Kraft inequality connects code length and probability, the core idea of the (modern) MDL principle is to assign a single probability distribution to the data given a rule set $\ruleset$ \citep{grunwald2019minimum}, the so-called \emph{universal distribution} denoted by $P_{\ruleset}(Y^n|X^n=x^n)$. Informally, $P_{\ruleset}(Y^n|X^n=x^n)$ should be a representative of the rule set model---as a family of probability distributions---$\{P_{\ruleset, \theta}(y^n | x^n)\}_\theta$.
The theoretically optimal ``representative" is defined to be the one that has minimax regret, i.e., 

\begin{equation}
\label{eq:define_regret}
	  \arg \min_{P_{\ruleset}} \max_{z^n \in \mathscr{Y}^n} -\log_2 P_{\ruleset} (Y^n = z^n|X^n = x^n) - \left(-\log_2 P_{\htheta(x^n, z^n)} (Y^n = z^n|X^n = x^n)\right).
\end{equation}
We write the parameter estimator as $\htheta(x^n, z^n)$ to emphasize that it depends on the values of $(X^n, Y^n)$. The unique solution to $P_{\ruleset}$ of Equation~(\ref{eq:define_regret}) is the so-called normalized maximum likelihood (NML) distribution~\citep{grunwald2007minimum},:

\begin{equation} \label{eq:NML_defi}
    P^{NML}_{\ruleset}(Y^n=y^n|X^n=x^n) = \frac{P_{\ruleset, \htheta(x^n, y^n)}(Y^n=y^n|X^n=x^n)}{\sum_{z^n \in \mathscr{Y}^n} P_{\ruleset, \htheta(x^n, z^n)}(Y^n = z^n|X^n=x^n)}.
\end{equation}
That is, we ``normalize" the distribution $P_{\ruleset, \htheta}(.)$ to make it a proper probability distribution, which requires the sum of all possible values of $Y^n$ to be 1. Hence, we have $\sum_{z^n \in \mathscr{Y}^n} P^{NML}_{\ruleset}(Y^n=z^n|X^n=x^n) = 1$ \citep{grunwald2019minimum}.

\subsection{Approximating the NML Distribution}

A crucial difficulty in using the NML distribution in practice is the computation of the normalizing term $\sum_{z^n} P_{\htheta(x^n, z^n)}(Y^n=z^n|X^n=x^n)$. Efficient algorithms almost only exist for exponential family models \citep{grunwald2019minimum}, hence we approximate the term by the product of the normalizing terms for the individual rules. 

\smallskip
\noindent \textbf{NML distribution for a single rule.}
For an individual rule $S \in \ruleset$, we write all instances covered by $S$ as $(x^S, y^S)$, in which $y^S$ can be regarded as a realization of the random vector of length $|S|$, denoted as $Y^S$. $Y^S$ takes values in $\mathscr{Y}^{|S|}$, the $|S|$-ary Cartesian power of $\mathscr{Y}$. Consequently, following the definition of the NML distribution in Equation~(\ref{eq:NML_defi}), the NML distribution for $P_S(Y)$ equals
\begin{equation} \label{eq:local_nml}
    P^{NML}_S(Y^S = y^{S}|X^S = x^S) = \frac{\hat{P}_S(Y^S = y^S|X^S = x^S)}{\sum_{z^S \in \mathscr{Y}^{|S|}} \hat{P}_S(Y^S = z^S|X^S = x^S)}.
\end{equation}
Note that $\hat{P}_{S}$ depends on the values of $z^S$. As $\hat{P}_S(Y)$ is a categorical distribution, it has been shown~\citep{mononen:08:sub-lin-stoch-comp} that the normalizing term can be written as $\mathcal{R}(|S|, |\mathscr{Y}|)$, a function of $|S|$---the rule's coverage---and $|\mathscr{Y}|$---the number of unique values that $Y$ can take:
\begin{equation}
    \mathcal{R}(|S|, |\mathscr{Y}|) = \sum_{z^S \in \mathscr{Y}^{|S|}} \hat{P}_S(Y^S = z^S|X^S = x^S),
\end{equation}
and it can be efficiently calculated in sub-linear time~\citep{mononen:08:sub-lin-stoch-comp}.

\smallskip
\noindent \textbf{The approximate NML distribution.}
We propose to approximate the normalizing term of the NML distribution for rule set model $P^{NML}_{\ruleset}$ as the product of the normalizing terms of $P^{NML}_S$ for all $S \in \ruleset$: 
\begin{equation} \label{eq:nml_cl_data}
    P^{apprNML}_{\ruleset}(Y^n =y^n | X^n=x^n) = \frac{P_{\ruleset, \htheta(x^n, y^n)}(Y^n=y^n|X^n=x^n)}{\prod_{S \in \ruleset} \mathcal{R}(|S|, |\mathscr{Y}|)}.
\end{equation}
Note that the sum over all $S \in \ruleset$ \emph{does} include the ``else rule" $S_0$. The rationale of using the approximate-NML distribution is as follows. First, it is equal to the NML distribution for a rule set without any overlap, as follows.

\begin{restatable}{proposition}{primelemma}

Given a rule set $\ruleset$ in which for any $S_i, S_j \in \ruleset$, $S_i \cap S_j = \emptyset$, then $P^{NML}_{\ruleset}(Y^n=y^n|X^n=x^n) = P^{apprNML}_{\ruleset}(Y^n=y^n|X^n=x^n)$.
\end{restatable}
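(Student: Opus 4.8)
The plan is to observe that the two distributions share an identical numerator, namely $P_{\ruleset, \htheta(x^n, y^n)}(Y^n=y^n|X^n=x^n)$, so it suffices to prove that their denominators coincide. Concretely, I would reduce the claim to showing that the NML normalizing sum factorizes over the individual rules:
\begin{equation}
\sum_{z^n \in \mathscr{Y}^n} P_{\ruleset, \htheta(x^n, z^n)}(Y^n = z^n|X^n=x^n) = \prod_{S \in \ruleset} \mathcal{R}(|S|, |\mathscr{Y}|).
\end{equation}
The left-hand side is the denominator of $P^{NML}_{\ruleset}$ and the right-hand side is the denominator of $P^{apprNML}_{\ruleset}$, so establishing this identity finishes the proof.

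First I would use the disjointness hypothesis. Since $S_i \cap S_j = \emptyset$ for all $i \neq j$, every instance is covered by exactly one rule (counting the else rule $S_0$), so the rules together with $S_0$ form a partition of the $n$ instances. Consequently the set $\mathscr{G}$ contains only the individual rules and no union terms, and the model likelihood from the i.i.d.\ factorization collapses to
\begin{equation}
P_{\ruleset, \theta}(z^n|x^n) = \prod_{S \in \ruleset} \prod_{(x,z) \vdash S} P(Y=z|X\in S).
\end{equation}
The second, and crucial, observation is that the estimator $\htheta(x^n, z^n)$ decomposes rule-by-rule: by the definition of the ML estimator $\hat{P}_S$, the parameters attached to rule $S$ are computed \emph{only} from the instances covered by $S$, and because the covers are disjoint these depend solely on the sub-vector $z^S$ of labels of instances in $S$. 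Thus each factor in the product above depends on a disjoint block of coordinates of $z^n$.

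Next I would exploit this coordinate-block structure. The partition induces a bijection $\mathscr{Y}^n \leftrightarrow \prod_{S \in \ruleset} \mathscr{Y}^{|S|}$, under which specifying $z^n$ is equivalent to independently specifying each $z^S$. Since the summand factorizes over these blocks, the distributive law turns the sum of the product into a product of sums:
\begin{equation}
\sum_{z^n \in \mathscr{Y}^n} P_{\ruleset, \htheta(x^n, z^n)}(Y^n = z^n|X^n=x^n) = \prod_{S \in \ruleset} \sum_{z^S \in \mathscr{Y}^{|S|}} \hat{P}_S(Y^S = z^S|X^S = x^S).
\end{equation}
Each inner sum is exactly $\mathcal{R}(|S|, |\mathscr{Y}|)$ by its definition, yielding the desired identity of denominators. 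I expect the main obstacle to be justifying this interchange of sum and product rigorously: one must argue carefully that disjointness and joint exhaustiveness of the covers make the label choices across rules genuinely independent, so that the per-rule MLE factors and the global sum really does split as a product. Handling the else rule $S_0$ on the same footing as the ordinary rules (it participates in the partition and contributes its own $\mathcal{R}(|S_0|, |\mathscr{Y}|)$ factor) is the one bookkeeping point I would state explicitly.
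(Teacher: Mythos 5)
Your proposal is correct and follows essentially the same route as the paper's proof: both reduce the claim to the equality of the denominators and then use the disjointness of the covers to split the sum over $z^n \in \mathscr{Y}^n$ into independent sums over the blocks $z^S \in \mathscr{Y}^{|S|}$, turning the sum of products into a product of the per-rule normalizing terms $\mathcal{R}(|S|, |\mathscr{Y}|)$. Your explicit remarks on the block-coordinate bijection and on including the else rule in the partition are exactly the bookkeeping the paper's chain of equalities carries out implicitly.
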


\noindent Second, when overlaps exist in $\ruleset$, approximate-NML puts a small extra penalty on overlaps, which is desirable to trade-off overlap with goodness-of-fit: when we sum over all instances in each rule $S \in \ruleset$, the instances in overlaps are ``repeatedly counted". Third, approximate-NML behaves like the Bayesian information criterion (BIC) asymptotically, which follows from the next proposition.

\begin{restatable}{proposition}{secondlemma}
Assume $\ruleset$ contains $K$ rules in total, including the else rule. Under the mild assumption that $|S|$ grows linearly as the sample size $n$ for all $S \in M$, then $\log \left(\prod_{S \in \ruleset} \mathcal{R}(|S|, |\mathscr{Y}|)\right) = \frac{K(|\mathscr{Y}| - 1)}{2} \log n + \mathcal{O}(1)$, where $\mathcal{O}(1)$ is bounded by a constant w.r.t.\ to $n$.
\end{restatable}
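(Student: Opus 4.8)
The plan is to reduce the statement to the classical asymptotic expansion of the multinomial \emph{parametric complexity} (the Shtarkov normalizing sum) for a single rule, and then add up the $K$ contributions, using the linear-growth assumption to replace each $\log|S|$ by $\log n$ up to a constant. First I would recall that, by definition, $\mathcal{R}(|S|, |\mathscr{Y}|)$ is exactly the Shtarkov sum of a categorical model with $k = |\mathscr{Y}|$ outcomes and $m = |S|$ observations,
\begin{equation}
\mathcal{R}(m, k) = \sum_{n_1 + \cdots + n_k = m} \binom{m}{n_1, \ldots, n_k} \prod_{j=1}^{k} \left(\frac{n_j}{m}\right)^{n_j},
\end{equation}
where the sum ranges over all nonnegative count vectors summing to $m$. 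Since the categorical distribution is a regular exponential family, the standard MDL asymptotic expansion of the parametric complexity (Rissanen's theorem; see also \citep{grunwald2007minimum, grunwald2019minimum}) applies and yields
\begin{equation}
\log \mathcal{R}(m, k) = \frac{k-1}{2} \log \frac{m}{2\pi} + \log \int_{\Delta_k} \sqrt{\det I(\vec{\beta})} \, d\vec{\beta} + o(1),
\end{equation}
where $\Delta_k$ is the probability simplex and $I(\vec{\beta})$ is the Fisher information matrix. The second summand is a finite constant depending only on $k = |\mathscr{Y}|$ (the Fisher--Rao volume of the simplex), so
\begin{equation}
\log \mathcal{R}(m, k) = \frac{|\mathscr{Y}|-1}{2} \log m + \mathcal{O}(1),
\end{equation}
with the $\mathcal{O}(1)$ bounded independently of $m$.

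Next I would invoke the assumption that every coverage grows linearly in $n$, i.e.\ $|S| = \Theta(n)$, so that $\log |S| = \log n + \mathcal{O}(1)$; substituting this into the previous display gives, for each rule $S \in \ruleset$ (including the else rule), $\log \mathcal{R}(|S|, |\mathscr{Y}|) = \tfrac{|\mathscr{Y}| - 1}{2} \log n + \mathcal{O}(1)$. Finally, since the logarithm turns the product into a sum and $K$ is fixed with respect to $n$, adding the $K$ such terms yields
\begin{equation}
\log \left( \prod_{S \in \ruleset} \mathcal{R}(|S|, |\mathscr{Y}|) \right) = \sum_{S \in \ruleset} \log \mathcal{R}(|S|, |\mathscr{Y}|) = \frac{K(|\mathscr{Y}| - 1)}{2} \log n + \mathcal{O}(1),
\end{equation}
because a finite sum of $\mathcal{O}(1)$ terms is again $\mathcal{O}(1)$, which is the claim.

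The main obstacle is establishing the single-rule expansion rigorously: namely that the lower-order part of $\log \mathcal{R}(m,k)$ is genuinely $\mathcal{O}(1)$ in $n$. This rests on the multinomial being a regular exponential family and, crucially, on the Fisher-information integral $\int_{\Delta_k} \sqrt{\det I(\vec{\beta})}\, d\vec{\beta}$ being \emph{finite} for the categorical model; since it is a known finite constant, no divergence is introduced into the $\mathcal{O}(1)$ term. I would also emphasize precisely where the ``mild assumption'' enters: the linear-growth condition is exactly what permits passing from $\log|S|$ to $\log n$ with only constant error. Were some rule's coverage to grow sublinearly, its contribution would still be $\tfrac{|\mathscr{Y}|-1}{2}\log|S| + \mathcal{O}(1)$, but with $\log|S|$ no longer matching $\log n$ up to a constant, so the clean coefficient $K(|\mathscr{Y}|-1)/2$ on the $\log n$ term would break down.
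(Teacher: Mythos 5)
Your proposal is correct and follows essentially the same route as the paper's proof: both reduce the claim to the known single-rule expansion $\log \mathcal{R}(|S|,|\mathscr{Y}|) = \frac{|\mathscr{Y}|-1}{2}\log|S| + \mathcal{O}(1)$ (the paper cites Rissanen's 1996 result directly, while you re-derive it via the finiteness of the Fisher-information integral over the simplex), then use the linear-growth assumption to write $\log|S| = \log n + \mathcal{O}(1)$, and finally sum the $K$ contributions.
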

We defer the proofs of the two propositions to the Appendix. 

 \subsection{Code length of model} \label{subsec:code_model}
To obtain the final MDL based score, we next describe how to calculate the code length of the model, denoted as $L(\ruleset)$. The code length needed to encode the model depends on the encoding scheme we choose. Given the Kraft's inequality~\citep{grunwald2007minimum}, this can be practically treated as putting prior distributions on the model class. We describe the encoding scheme in a hierarchical manner due to the complexity of the model class. 

\smallskip
\noindent \textbf{Integer code for the number of rules.} 
First, we encode the number of rules in the rule set, for which we use the standard Rissanen's integer universal code~\citep{rissanen1983universal}. The code length needed for encoding an integer $K$ is equal to 
$$L_{rissanen}(K) = c + \log_2(K) + \log_2(\log_2(K)) + \log_2(\log_2(\log_2(K))) + \ldots ;$$
the summation continues until a certain precision is reached (which we set as $10^{-5}$ in our implementation), and $c \approx 2.865$ is a constant. 

\smallskip
\noindent \textbf{Encoding individual rules.} 
Next, we encode the each individual rule separately. For a given rule with $k$ literals, we first encode $k$, the number of literals, by a \emph{uniform code}: as $k$'s range is bounded by the number of columns of the dataset, denoted by $K_{col}$,  the code length needed to encode $k$ is equal to
\begin{equation}
	L_{num\_literal} = \log_2 K_{col}. 
\end{equation}
As each literal contains one unique variable, given the number of literals $k$, we further specify which are these $k$ variables among all $K_{col}$ variables, again with a uniform code. Thus, the code length needed to specify which these $k$ variables are is equal to
\begin{equation}
	L_{which\_vars} = \log_2 {K_{col} \choose k}. 
\end{equation}

Further, we sequentially encode the \emph{operator} (i.e., `$\geq$' and/or `$<$') and the \emph{value} of each literal. Specifically, for numeric variables, the literal is in either of the two forms: 1) $X \geq (\text{or} <)$ $v$, and 2) $v_1  \leq X < v_2$. As a result, we first need to encode the which form the literal is, which cost $L_{form} = 1$ bit. Next, to encode the values $v$ [or $(v_1, v2)$], we need to know in advance the search space of $v$ [or $(v_1, v2)$], which are chosen as quantiles in our algorithm implementation. 

The number of candidate values (quantiles) for each numeric feature variable is a hyper-parameter, which we argue should be chosen based on the task at hand: it should be large enough without loss too much information for the prediction, while at the same time the computational budget and the prior knowledge on what is useful for interpreting the rules should also be taken into account in practice. 

Denote the number candidate cut points \emph{after excluding those that result in a rule with coverage equal to 0} as $K_{value}$. Depending on whether the literal contains one or two splits, we can further calculate the code length needed to encode the operator and value(s) in the literal, denoted as $L_{value\_op}$, as

\begin{equation} \label{eq:l_val_1}
	L_{value\_op} = L_{operator} + L_{form} + \log_2 {K_{value}} , \text{or}
\end{equation}
\begin{equation}
	L_{value\_op} = \log_2 {K_{value} \choose 2} + L_{form}, 
\end{equation}
since for the former case we also need to encode the operator in the literal, i.e., ``$\geq$" or ``$<$", which cost $L_{operator} = 1$ bit. In contrast, the latter case has only one possibility for the operators, and hence requires $0$ bit to encode it. 

Next, for categorical variables with $\mathcal{L}$ levels, encoding a subset of $l$ levels requires $L_{value\_op} =\log_2 \mathcal{L} + \log_2 {\mathcal{L} \choose l}$ bits; the former term, $\log_2 \mathcal{L}$, is needed for encoding the number $l$ itself, and the latter one is code length needed to specify these $l$ levels from $\mathcal{L}$ in total. For simplicity, in our implementation we assume all categorical features are one-hot encoded, and hence $L_{value\_op} = 1$. 

To sum up, the number of bits needed for encoding an individual rule $S$, denoted as $L(S)$, is equal to
\begin{equation}
	L(S) = L_{num\_literal} + L_{which\_vars} + \sum L_{value\_op}, 
\end{equation}
in which the term $\sum L_{value\_op}$ denotes the summation of the code length needed to encode the operator and value for each single literal. 

Note that $2^{-L(S)}$ can be interpreted as a prior probability mass for $S$ among all possible individual rules~\citep{grunwald2007minimum}. Moreover, because of the way we determine $K_{value}$ (i.e., by excluding those candidate cut points that lead to rules with coverage equal to 0), the code length needed to encode a single rule does depend on the order of encoding each literal in the condition of the rule. This turns out to be desirable because of our algorithmic approach, which will be described in Section~\ref{sec:alg}.

\smallskip
\noindent \textbf{Encoding the rule set. }
Based on the code length needed for single rules, we can now define the code length needed to encode the whole rule set. Given a rule set $\ruleset$ with $K$ rules, the total bits needed to encode $\ruleset$ is 
\begin{equation} \label{eq:cl_model}
	L(M) = L_{rissanen}(K) + \sum_{i = 1}^K L(S) - \log_2(K!), 
\end{equation}
in which the last term is to eliminate the redundancy caused by the fact that the order of the rules in a rule set does not matter. 

To see the rationale of introducing the term $(-\log_2(K!))$, consider the prior probability of each rule denoted as $P(S_i) = 2^{-L(S_i)}$. Then, the prior probability of the set of rules $\{S_1, ..., S_K\}$, conditioned on the fixed $K$, can be defined as 
\begin{equation}
	P(\{S_1, ..., S_K\}) = \sum \prod_{i = 1}^K P(S_i) = (K!) \prod_{i = 1}^K P(S_i), 
\end{equation}
in which the sum goes over all permutations of $\{S_1, ..., S_K\}$. Thus, we have $L(M) = L_{rissanen}(K) - \log_2 P(\{S_1, ..., S_K\})$, which connects the definition of $L(M)$ to the prior probability of $M$ and hence justifies the introduction of the term $(-\log_2(K!))$ in Equation~(\ref{eq:cl_model}). 

\subsection{MDL-based model selection}
After the describing the approximate normalized maximum likelihood distributions and the code length (number of bits) needed to specifying a model in the model class, we can now formulate the task of learning truly unordered rule sets as a model selection problem. That is, our goal is to search for the rule set, denoted as $M^*$, among all possible rule sets $\mathcal{M}$, such that
\begin{equation} \label{eq:global_score}
    M^* = \arg \min_{M} L\left((x^n, y^n), M\right) := \arg \min_{M}\, [-\log_2  P^{apprNML}_{\ruleset}(Y^n =y^n | X^n=x^n)  + L(M)], 
\end{equation}
in which $P^{apprNML}_{\ruleset}(Y^n =y^n | X^n=x^n)$ is defined in Equation~(\ref{eq:nml_cl_data}) and $L(M)$ in Equation~(\ref{eq:cl_model}). 

We refer to the proposed optimization function $L((x^n, y^n), M)$ as our model selection criterion; for a fixed model $M$ and a (training) dataset $(x^n, y^n)$, we refer to the value of $L((x^n, y^n), M)$ as the \emph{MDL-based score for the rule set model $M$}.

\section{Learning Truly Unordered Rules from Data} \label{sec:alg}
Given the combinatorial nature of the search space, learning rule sets from data is an extremely difficult task. Notably, although recently proposed algorithms can obtain provably optimal rule lists~\citep{angelino2017learning} and decision trees~\citep{hu2019optimal}, their branch-and-bound approaches are not applicable to learning TURS models due to the following reasons. First, our model class (and hence also search space) is different than that of rule lists and decision trees, since our TURS model allows for overlaps of rules. Second, the output of the TURS model is probabilistic while the optimal trees/lists algorithms learn rule-based models with non-probabilistic (or just binary) output. Third, our model selection criterion, although requiring no hyper-parameter for regularization, does not allow efficient search for the global optimum, as like most existing MDL-based approaches~\citep{galbrun2022minimum}. Hence, we cannot easily apply the branch-and-bound approaches as employed by the optimal tree/list algorithms. 

As for rule set methods, traditional algorithms focus on defining heuristics that try to characterize the ``quality" of individual rules in different ways, often without a global optimization score~\citep{furnkranz2012foundations, furnkranz2005roc}. In addition, recently proposed ones mostly rely on randomized techniques: DRS~\citep{zhang2020diverseRuleSets} is based on heuristic-based randomized algorithm, IDS~\citep{lakkaraju2016interpretable} on stochastic local search, BRS~\citep{wang2017bayesian} on simulated annealing, and CG~\citep{dash2018boolean} on (randomized) integer programming. However, BRS and CG are only suitable for binary target and non-probabilistic rules, while DRS and IDS turn out to have unsatisfactory predictive performance as shown in Section~\ref{sec:exp}.  

Therefore, we develop a heuristic-based algorithm for iteratively learning single rules with extensive innovations in comparison to traditional heuristic algorithms. 

\subsection{Learning a rule set}
\label{subsec:single_rule}
In the following, we start by describing the process of iteratively learning a rule set, followed by discussing the heuristic of defining the ``best" single rule given the current status of the rule set. Then, we discuss how to learn a single rule in Section~\ref{subsec:rule_grow}, in which we introduce a \emph{diverse-patience dual-beam search algorithm, together with a novel look-ahead strategy that we propose based on the analogy between the MDL principle and hypothesis testing}~\cite[Chapter 14.3]{grunwald2007minimum}, which we hence name ``MDL-based local testing". 
\subsubsection{Iteratively learning a rule set}
\begin{algorithm} [ht]
\caption{Iteratively Learning a Rule Set} \label{alg:rule_set}
\begin{algorithmic}[1]
\REQUIRE{dataset $D=(x^n, y^n)$}
\ENSURE{rule set $M$}
\STATE Initialize $M$ \COMMENT{Empty rule set.}
\WHILE{TRUE}
    \STATE $S \leftarrow$ Learn-Single-Rule$(M, D)$ \COMMENT{Described in Algorithm~\ref{alg:find_next_rule}}
    \IF{$L(D, M \cup \{S\}) < L(D, M)$}
        \STATE $M \leftarrow M \cup \{S\}$ \COMMENT{The ``else-rule" updates accordingly}
    \ELSE
        \RETURN rule set $M$
    \ENDIF
\ENDWHILE
\end{algorithmic}
\end{algorithm}

The process of learning a rule set iteratively, rule by rule, is shown in Algorithm~\ref{alg:rule_set}. The algorithm starts with an empty rule set (in which all instances are covered by the ``else-rule")~[Line 1]. Then, the ``best" single rule, defined as the one that maximizes what we call the \emph{learning-speed-score} heuristic that is discussed in detail next, is learned from data~[Line 3]. This single rule is added to the rule set if adding it to the rule set decreases the MDL-based model selection criterion defined Equation~(\ref{eq:global_score})~[Lines 4-5]. This process is repeated until no new rule can be found that further optimizes our model selection criterion~[Lines 2-9]. 

\subsubsection{Heuristic score for a single rule}
Consider the search space of all possible rule sets, adding one single rule to the rule set can be considered as one single ``step" towards another ``point" in the search space. 
As it is obviously meaningless to add a new rule that does not cover any previously uncovered instance, such a step always leads to a monotonic increase for the \emph{coverage} of the rule set (excluding the else rule). 

Therefore, we propose a heuristic that leads to the next rule (step) with the \emph{steepest descent with respect to the increase in the coverage of rule set}; that is, the next ``best" single rule (step) is defined as the one that maximizes \emph{the decrease of the MDL-based score per extra covered instance}.  We hence name this heuristic as the \emph{learning speed score.}
Formally, given a rule set denoted as $M$, the learning speed score for a single rule $S$ to be added to $M$ is defined as 
\begin{equation}\label{eq:learning_speed}
    r(S) = \frac{L\left((x^n, y^n), M\right) - L\left((x^n, y^n), M \cup \{S\}\right)}{|M \cup \{S\}| - |M|}, 
\end{equation}
in which $M \cup \{S\}$ denotes the rule set obtained by adding the single rule $S$ to $M$. Further, $|{M}|$ and $|M \cup \{S\}|$ respectively denotes the coverage before and after adding $S$ to the rule set ${M}$ (excluding the else-rule). 


We next discuss how to search for the next best rule that optimizes $r(S)$. 

\subsection{Learning a single rule} \label{subsec:rule_grow}
For describing our algorithm for learning a single rule, we start with describing the general paradigm of applying beam search in learning a single rule, and then move forward to describe our three algorithmic innovations. Last, we put everything together and describe our proposed algorithm in detail. 

\subsubsection{Preliminary: Beam Search for Learning a single rule}
Recall that the condition of a rule $S$ can be written as the \emph{conjunction} of literals, in which each literal takes the form of $\{X_i \in R_i\}$, with $R_i$ representing an interval if $X_i$ is a quantitative variable and a set of categorical levels if $X_i$ is a categorical variable. 

When applying a beam search in learning a single rule, we start with an \emph{empty rule containing no literal that hence covers all instances}. Next, we enumerate all feature variables $X_i$ to construct the search space of all possible single literals: for continuous-valued $X_i$, we pick quantiles as \emph{splits points} and combine it with the operator (`$\geq$' or `$<$') to construct a literal, in which the ``search granularity" (i.e., the number of quantiles) is a hyper-parameter that depends on the task at hand, as previously discussed in Section~\ref{subsec:code_model}; for categorical variables, we assume they are all one-hot encoded for simplicity, and hence the possible literals are just $(X_i = 1)$ or $(X_i = 0)$. After enumerating all possible single literals, given a beam width $W$, we rank these literals with a predetermined criterion, and then pick the top-$W$ literals to be the $W$ candidate rules of length one. 

Next, for each of these $W$ candidate rule of length one, we repeat the process of enumerating all possible single literals to append to this rule. We refer to these possible rules obtained by adding one more literal to a given rule as the \emph{rule growth results}. Among all \emph{rule growth results} of these $W$ length-one candidate rules, we again pick the top-$W$ length-two candidate rules, according to the predetermined criterion. 

We can repeat this process until some stopping criterion is met, e.g., no rule growth result that can further optimize the model selection criterion can be found (or this has happened consecutively for a number of times). Lastly, among all these candidate rules with different lengths, we return the rule based on the heuristic that defines the ``best" next rule (i.e., the learning speed score $r(.)$ in our case). 

Note that we build our \emph{diverse-patience dual-beam search} algorithm upon this general paradigm of applying beam search to learning a single rule  with significant algorithmic innovations, as follows: 1) instead of using one single heuristic for searching for the next ``best" rule, we introduce a look ahead strategy in the rule growth process; 2) instead of simply keeping the top-$W$ rule growth results in the beam, we also monitor the diversity of ``patience"; and 3) instead of a single beam, we introduce another auxiliary beam with a complementary score and we simultaneously keep two beams. The complementary score is proposed as we observe that allowing overlaps in rule sets leads to the algorithmic challenge that existing rules in the rule set may become obstacles to searching for new rules to be added to the rule set. We next describe these three heuristics in depth.

\subsubsection{MDL-based local testing} \label{subsec:local_local testing}
When growing a rule $S$ by adding a literal and obtaining its growth result $S'$, we essentially leave out the instances covered by $S$ \emph{but not} $S'$ to be covered potentially by rules we may obtain later. Existing rule learning heuristics often neglect this left-out part but focus only on characterizing the quality of the rule growth result $S'$ itself. In contrast, we introduce a local test that can be viewed as a way of assessing whether it is better to model the instances in $\{S \setminus S'\}$ by the rule $S$ (and hence discard $S'$ and stop growing $S$), or to leave out the instances in $\{S \setminus S'\}$ for ``future" rules that we may obtain later. 

Formally, consider a rule $S$, its growth result $S'$, and the potentially left-out part, defined and denoted as $S_l = S \setminus S'$. We only proceed to consider $S'$ as an appropriate rule growth candidate if 
\begin{equation} \label{eq:local_local testing}
	-\log_2 P_S^{NML}(y^S|x^S) > -\log_2 P_{S'}^{NML}(y^{S'}|x^{S'}) - \log_2 P_{S_l}^{NML}(y^{S_l}|x^{S_l}) + L_{split}, 
\end{equation}
in which $P_S^{NML}(y^S|x^S)$ is the NML-distribution when viewing a single rule as a local probabilistic model, defined in Equation~(\ref{eq:local_nml}). Further, $L_{split}$ denotes the code length needed to encode the condition that splits $S$ into $S'$ and $S_l$. This requires specifying 1) the variable of the literal and 2) the numeric threshold or the categorical levels (which depends on the variable type), both with the uniform code as described in Section~\ref{subsec:code_model}. That is, we only allow rule growth that satisfies the local test defined in Equation~(\ref{eq:local_local testing}). 

Intuitively, this is equivalent to building a depth-one decision tree for instances covered by $S$ only, in which the left and right nodes are $S'$ and $S_l$ respectively. We then compare whether $S$ on itself or $S'$ \emph{together with} $S_l$ is a better local model, according to the MDL principle~\citep{grunwald2007minimum}. Recall that MDL-based model selection picks the model that minimizes the code length needed to encode the data together with the model; thus, if the local test is satisfied, we prefer the depth-one decision tree with nodes $S'$ and $S_l$ over the single-node tree with the only node $S$, and vice versa.



The rationale of the local test is that, by explicitly considering the local model for the left out part $S_l$, we incorporate the potential carried by the instances in $S_l$. That is, the local test we introduce can exclude those rule growth results of $S$ that may leave out a subset of instances that are hard to model later. We empirically show in Section~\ref{subsec:local_local testing} that without MDL-based local testing, the learning speed score can be too greedy and hence the algorithm fails to reveal the ground-truth rule set model even in a simple simulated case. 



\subsubsection{Beam search with ``patience" diversity} \label{subsec:diverse_beam_search}
We now present the beam search with the patience diversity. For the simplicity of presentation, we now focus on describing the beam search with the ``main" beam that adopts the learning speed score $r(S)$ as the heuristic, after which the description of the complementary-score-assisted auxiliary beam immediately follows in Section~\ref{subsec:auxiliary_beam}.

Assuming the beam width is $W$, we start with a rule with empty condition which all instances satisfy. Next, we go over all possible rule growth results by adding one single literal. Furthermore, we keep the top-$W$ rule growth results, with the following properties: 1) they satisfy the MDL-based local testing, defined previously in Section~\ref{subsec:local_local testing}; 2) they have the highest learning speed score defined by $r(S)$ in Equation~(\ref{eq:learning_speed}); and 3) they satisfy the patience diversity constraint, which we discuss below.

\smallskip
\noindent \textbf{Motivation for ``patience" diversity.} While we aim to iteratively search for the rule with the best learning speed score $r(S)$ (Equation~\ref{eq:learning_speed}), it may be too greedy to directly use $r(S)$ to search for the next best literal (as a rule can contain multiple literals). Denote a rule as $S$ and its growth result as $S'$, we empirically observe that the coverage of $S'$ can shrink drastically in comparison to that of $S$ when directly using $r(S)$ for learning the next literal. However, a more ``patient" search procedure with a moderate change in the coverage may be desirable in some cases, as a moderate decrease in coverage leaves many possibilities for adding more literals later. This concept of ``patience" was first introduced in PRIM~\citep{friedman1999bump}, and we are the first to combine it with a beam search approach. 

Specifically, we propose to use the beam search approach to keep the diversity of the patience, i.e., to have a variety of rule growth results, with diverse coverage \emph{relative to the rule from which the rule growth result is obtained}. 



\noindent \textbf{Beam search with patience diversity.} Given a potentially incomplete rule $S$, we search all candidate rules $\{S'\}$ that can be obtained by adding a literal to $S$ (excluding those not satisfying the MDL-based local test). 

Given a beam width $W$, we categorize all candidate rules, denoted as $\{S'\}$, into $W$ clusters according to their coverage: the $w$th cluster is defined as: 
\begin{equation}
	\{S'\}_w = \{S' \in 	\{S'\}: \frac{|S'|}{|S|} \in \left[\frac{w-1}{W}, \frac{w}{W}\right) \}, \,\,\,\,\,\, w \in \{1, ..., W\}; 
\end{equation}
i.e., all candidate rule growth results in $\{S'\}_w$ must satisfy the condition that its coverage divided by the coverage of $S$ is in the interval $[(w-1)/W, w/W)$. 

For each cluster, we search for the best growth result by optimizing the learning speed score $r(S')$. In this way, our beam search is diverse with regard to the degree of ``patience": when the coverage decreases by a small ratio only, the optimization is ``patient" (by leaving a lot of possibilities for adding more literals); on the other hand, when the coverage decreases by a large ratio, the optimization is greedy (by leaving out little room for further refinement). We empirically show that adopting patience diversity improves the prediction performance of our method in Section~\ref{subsec:exp_patience}. 

\subsubsection{Auxiliary beam with a complementary score} \label{subsec:auxiliary_beam}
\begin{figure}[ht]
    \centering
    \includegraphics[width=0.4\textwidth, height=0.25\textwidth]{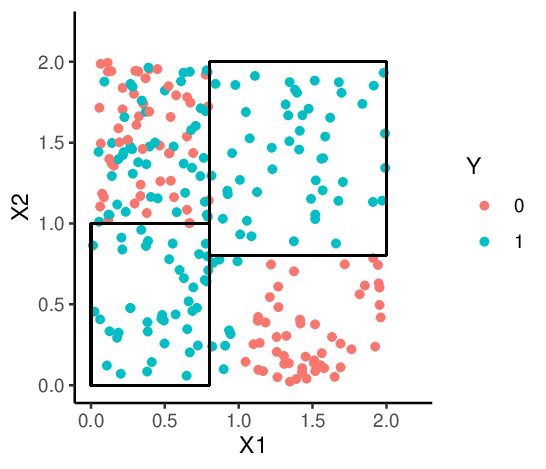}
    \includegraphics[width=0.4\textwidth, height=0.25\textwidth]{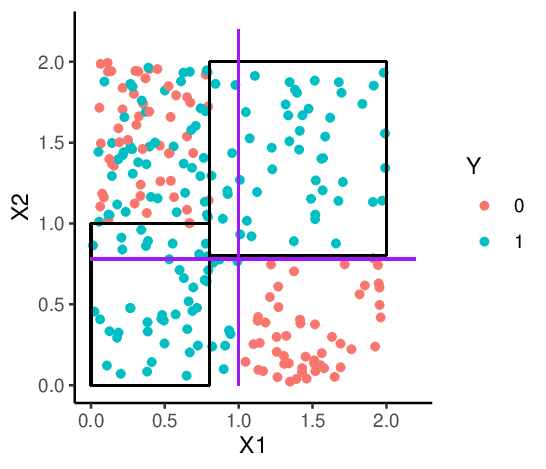}
    \caption{(Left) Simulated data with a rule set containing two rules (black outlines). (Right) Growing a rule to describe the bottom-right instances will create conflicts with existing rules. E.g., adding either $X_1 > 1$ (vertical purple line) or $X_2 < 0.8$ (horizontal purple line) would create a huge overlap that deteriorates the likelihood.}
    \label{fig:alg2}
\end{figure}

\noindent
We now describe the auxiliary beam in our dual-beam approach. We start with the motivation for having an auxiliary beam, and next describe in detail the complementary score, as well as how we incorporate the auxiliary beam in the beam search algorithm. 



\smallskip
\noindent \textbf{Motivation for auxiliary beam.} 
Recall that the learning speed score $r(S)$ evaluates the decrease of the MDL-based optimization score per extra covered instance when $S$ is added to the rule set; thus, to maximize $r(S)$ we aim for obtaining a rule $S$ that 1) improves the likelihood of the instances not covered by the rule set so far, and 2) has similar class probability estimates to those rules in the rule set that overlap with $S$. However, when iteratively searching for the next literal, the single literals we consider may not be able to contribute to both aims simultaneously. 


Consider an illustrative example with data and a rule set with two rules (in black) in Figure~\ref{fig:alg2} (left). If we want to grow a rule that covers the \emph{bottom-right} instances, the existing rules form a blockade: the right plot shows how adding either $X_1 > 1$ or $X_2 < 0.8$ to the empty rule (shown in purple) would create a large overlap with the existing rules, with significantly different probability estimates. 

Our auxiliary beam is useful in cases like this, to keep literals like $X_1 > 1$ or $X_2 < 0.8$, which solely contributes to the first goal we discussed above, i.e., it improves the likelihood of the instances not covered by the rule set so far but creates a ``bad" overlap with a large class probability difference. As a rule's class probability estimation is still up to change during the growing process, we can potentially ``correct" bad overlaps by adding more literals later. 

Thus, we propose an auxiliary beam together with a complementary score, informally defined as the \emph{learning speed score calculated by ignoring the overlap created by the new rule that is being grown}. We next formally define the complementary score.

\smallskip
\noindent \textbf{Complementary score.}
Formally, given a rule set $M$ and a new rule $S$ (i.e., $S \notin M$), the complementary learning speed after adding $S$ to $M$, denoted as $R(S)$, is defined as
\begin{equation} \label{eq:surrogate_learning_rate}
 R(S) = \frac{L\left((x^n, y^n), M\right) - L\left((x^n, y^n), M \cup \{S \setminus M\}\right)}{|M \cup \{S \setminus M\}| - |M|}
\end{equation}
in which $S \setminus M$ can be regarded as a ``hypothetical" rule with the cover equal to the instances covered by rule $S$ excluding the instances covered by rules already in $M$, and hence $L\left((x^n, y^n), M \cup \{S \setminus M\}\right)$ denotes the MDL-based score (as defined in Equation~\ref{eq:global_score}) after adding $S \setminus M$ to the rule set $M$.

\smallskip
\noindent \textbf{Complementary-score-assisted beam search.}
We simultaneously keep two beams, both with a beam width $W$. Apart from the beam that keeps the top-$W$ literals according to the learning speed score $r(.)$, we additionally keep an auxiliary beam that keeps the top-$W$ literals according to the complementary score $R(.)$.

Further, the auxiliary beam must also satisfy the MDL-based local testing defined in Section~\ref{subsec:local_local testing}, with the NML distribution calculated based on instances \emph{excluding} those covered by the rule set. That is, consider a rule set $M$, a rule $S$ with its growth result $S'$, and the left-out part $S \setminus S' := S_l$, the local test for the auxiliary beam is defined as 
\begin{equation} \label{eq:local_local testing_surrogate}
	-\log_2 P_S^{NML}(y^{S\setminus M}|x^{S\setminus M}) > -\log_2 P_{S'\setminus M}^{NML}(y^{S'\setminus M}|x^{S' \setminus M}) - \log_2 P_{S_l\setminus M}^{NML}(y^{S_l \setminus M}|x^{S_l \setminus M}) + L_{split}.
\end{equation}

Additionally, the auxiliary beam must satisfy the patience diversity, as described in Section~\ref{subsec:diverse_beam_search}. The only difference is that the coverage of each rule is calculated based on $S\setminus M$ instead of $S$; i.e., instances that are already covered by the rule set $M$ are ignored.

%
%
%
%
%
%
%
%
%
%

\begin{algorithm}[ht]
\caption{Learn a single rule} \label{alg:find_next_rule}
\begin{algorithmic}[1]
\REQUIRE Rule set ${M}$, dataset $(x^n, y^n)$, beam width $W$;
\ENSURE The next rule $S$
\STATE $\text{all\_candidate\_rules} \leftarrow [\,\,\, ]$
\STATE $\text{rules\_for\_next\_iter} \leftarrow [ \emptyset ]$ \COMMENT{Initialize the rule with an ``empty" condition}
\WHILE{TRUE}
 \STATE $\text{beam} \leftarrow [\,\,\,]$ \COMMENT{Initialize the beam for the beam search}
	\STATE $\text{auxiliary\_beam} \leftarrow [\,\,\,]$ \COMMENT{Initialize the auxiliary beam  (Section~\ref{subsec:auxiliary_beam}})
	\FOR{$\text{rule}$ in $\text{rules\_for\_next\_iter}$}
		\STATE $\text{rule\_candidates} \leftarrow \text{generate\_candidates}(\text{rule})$ \COMMENT{Enumerate literals and append to rule}
		\STATE $\text{categorized\_candidates} \leftarrow \text{categorize}(\text{rule\_candidates})$ \COMMENT{Categorize into clusters by coverage (Section~\ref{subsec:diverse_beam_search})}
    \STATE $\text{categorized\_candidates} \leftarrow \text{MDL\_local\_testing(categorized\_candidates)}$ \COMMENT{Defined in Section~\ref{subsec:local_local testing}}
    \STATE $\text{top\_W\_candidates} \leftarrow \text{The candidate with the highest} \,\,\,r(.)\,\,\,$ 
    \STATE $\text{\,\,\,\,\,\,\,\,\,\,\,\,\,\,\,\,\,\,\,\,\,\,\,\,\,\,\,\,\,\,in each category of categorized\_candidates}$
		
		\STATE $\text{categorized\_candidates\_auxiliary} \leftarrow \text{categorize}(\text{rule\_candidates})$ \COMMENT{categorize into clusters by coverage excluding the instances covered by ${M}$ (Section~\ref{subsec:diverse_beam_search})}
        \STATE $\text{categorized\_candidates\_auxiliary} \leftarrow \text{MDL\_local\_testing(categorized\_candidates\_auxiliary)}$
        \STATE $\text{top\_W\_auxiliary} \leftarrow \text{The best candidate with the highest} \,\,\,R(.)\,\,\,$ 
        \STATE $\,\,\,\,\,\,\,\,\,\,\,\,\,\,\,\,\,\, \,\,\,\,\,\,\,\,\,\,\,\,\text{in each category of categorized\_candidates\_auxiliary}$
            \STATE $\text{beam.append(top\_W\_candidates)}$ 
            
            \STATE $\text{auxiliary\_beam.append(top\_W\_auxiliary)}$ 
             
	\ENDFOR
    
	\IF{$\text{stopping\_criterion\_is\_met}$}
		\RETURN \text{the rule with the highest} r(.) \text{in all\_candidate\_rules}
	\ELSE
            \STATE $\text{beam }\leftarrow \text{top-$W$ candidates in beam with the highest $r(.)$}$ \COMMENT{reduce the number of rules to $W$}
            \STATE $\text{auxiliary\_beam} \leftarrow \text{top-$W$ candidates in auxiliary\_beam with the highest $r(.)$}$
		\STATE $\text{all\_candidate\_rules.append(beam)}$
		\STATE $\text{rules\_for\_next\_iter} \leftarrow \text{beam} \cup \text{auxiliary\_beam}$
	\ENDIF
\ENDWHILE
\end{algorithmic}
\end{algorithm}

\subsubsection{Algorithm description}
We now put all heuristics together and describe in full our algorithm for finding the next rule, of which the pseudo code is provided in Algorithm~\ref{alg:find_next_rule}. 

With a rule set $M$ that either contains no rule or some existing rules, we always start with an \emph{Empty Rule} that contains no literals for its condition, and we initialize the ``rules\_for\_next\_iter" \textbf{[Line 2]} as an array containing the empty rule only. 

For each iteration, we initialize a new beam an a new auxiliary beam \textbf{[Line 4-5]} with beam width $W$. The beam keeps the top-$W$ rule growth results using the learning speed score defined in Equation~(\ref{eq:learning_speed}); in contrast, the auxiliary beam keeps the $W$ best rule growth results using the complementary score by ignoring the rule's overlap with $M$, as discussed in detail in Section~\ref{subsec:auxiliary_beam}. 

Next, we use every rule in the ``rules\_for\_next\_iter" array as a ``base" for growing \textbf{[Line 6-18]}. Specifically, given a rule, we first generate its candidate growth \emph{by adding one literal only} \textbf{[Line 7]}. That is, we go over all feature variables in the dataset, and for each variable, we generate candidate literals with numeric thresholds (quantiles) or with categorical levels, based on the variable type. 

Further, we cluster the generated candidates by their coverage (for the beam), as well as their coverage excluding the instances already covered by $M$ (for the auxiliary beam) \textbf{[Line 8 \& 12]}. 
We next filter out the candidates in ``categorized\_candidates" and ``categorized\_candidates\_auxiliary" with the MDL-based local test defined in Section~\ref{subsec:local_local testing} \textbf{[Line 9 \& 13]}. Further, we search for the best candidate in each cluster of the beam using $r(.)$, and each cluster of the auxiliary beam using $R(.)$ \textbf{[Line 10-11 \& 14-15]}. 

To check whether the growing process should be stopped after this iteration, we take a budget denoted as $K_{stop}$: we stop the beam search when this is the $K_{stop}$-th time in a row that both beams (the beam and the auxiliary beam) produce rules with worse scores ($r(.)$ for the ``beam" and $R(.)$ for the ``auxiliary\_beam") than the previous beams \textbf{[Line 19]}. 

If the stopping criterion is not met, we first filter the beam and auxiliary beam to reduce the number of rules in each beam to be equal to the beam width $W$, as both of them now contain $(W * \text{length(rules\_for\_next\_iter)})$ rules \textbf{[Line 22-23]}. Specifically, we sort all rules in the beam based on their coverage and categorize them into $W$ clusters; next, for each cluster, we keep the top-$W$ rules with the highest $r(.)$ for the ``beam" and highest $R(.)$ for the ``auxiliary\_beam", as the base for rule growing for the next iteration.
Last, we update ``all\_candidate\_rules" and ``rules\_for\_next\_iter" \textbf{[Line 24-25]}, and continue to the next iteration \textbf{[Line 3]}. The former is the pool we use for finally selecting the next best rule to be potentially added to $M$, and the latter contains all ``base rules" for the next rule growth iteration, which contains all rules in the beam and the auxiliary beam. 

Finally, if the stopping criterion is met, we return the rule $S$ among ``all\_candidate\_rules" with the best (largest) learning speed score $r(.)$ \textbf{[Line 20]}. 

\begin{table}[ht]
\centering
\begin{tabular}{|l|ll|l|cc|}
  \hline
Data & \# rows & \# columns & \# classes & max. class prob. & min. class prob. \\ 
  \hline
  \textit{aloi} & 49534 &   28 &    2 & 0.970 & 0.030 \\ 
  \textit{backdoor} & 95329 &  197 &    2 & 0.976 & 0.024 \\ 
  backnote & 1372 &    5 &    2 & 0.555 & 0.445 \\ 
  chess & 3196 &   37 &    2 & 0.522 & 0.478 \\ 
  diabetes &  768 &    9 &    2 & 0.651 & 0.349 \\ 
  \textit{glass-2} &  214 &    8 &    2 & 0.958 & 0.042 \\ 
  ionosphere &  351 &   35 &    2 & 0.641 & 0.359 \\ 
  magic & 19020 &   11 &    2 & 0.648 & 0.352 \\ 
  \textit{mammography} & 11183 &    7 &    2 & 0.977 & 0.023 \\ 
  \textit{musk} & 3062 &  167 &    2 & 0.968 & 0.032 \\ 
  optdigits & 5216 &   65 &    2 & 0.971 & 0.029 \\ 
  pendigits-2 & 6870 &   17 &    2 & 0.977 & 0.023 \\ 
  \textit{satimage-2} & 5803 &   37 &    2 & 0.988 & 0.012 \\ 
  \textit{smtp} & 95156 &    4 &    2 & 1.000 & 0.000 \\ 
  \textit{thyroid} & 3772 &    7 &    2 & 0.975 & 0.025 \\ 
  tic-tac-toe &  958 &   10 &    2 & 0.653 & 0.347 \\ 
  \textit{vowels} & 1456 &   13 &    2 & 0.966 & 0.034 \\ 
  \textit{waveform-2} & 3443 &   22 &    2 & 0.971 & 0.029 \\ 
  \textit{wdbc} &  367 &   31 &    2 & 0.973 & 0.027 \\ 
  \hline
  
  anuran & 7195 &   24 &    4 & 0.614 & 0.009 \\ 
  avila & 20867 &   11 &   12 & 0.411 & 0.001 \\ 
  car & 1728 &    7 &    4 & 0.700 & 0.038 \\ 
  contracept & 1473 &   10 &    3 & 0.427 & 0.226 \\ 
  drybeans & 13611 &   17 &    7 & 0.261 & 0.038 \\ 
  glass &  214 &   11 &    6 & 0.355 & 0.042 \\ 
  heartcleveland &  303 &   14 &    5 & 0.541 & 0.043 \\ 
  iris &  150 &    5 &    3 & 0.333 & 0.333 \\ 
  pendigits & 7494 &   17 &   10 & 0.104 & 0.096 \\ 
  vehicle &  846 &   19 &    4 & 0.258 & 0.235 \\ 
  waveform & 5000 &   22 &    3 & 0.339 & 0.329 \\ 
  wine &  178 &   14 &    3 & 0.399 & 0.270 \\ 
   \hline
\end{tabular}
\caption{Datasets for binary (top) and multi-class classification (bottom), publicly available on the UCI repository and the ADBench Python package; datasets from the latter are marked in \textit{Italic}. We use the maximum and minimum of the marginal class probabilities to indicate the degree of class imbalance. } 
\label{table:data_info}
\end{table} 
\section{Experiments} \label{sec:exp}
We extensively benchmark our diverse-patience dual-beam algorithm and we study the truly unordered rule sets (TURS) model learned from data in the following aspects:
\begin{enumerate}
     \vspace{-0.2cm}\item Does the TURS model learned from data achieve on-par or better classification performance in comparison to other rule-based methods, especially rule set methods that allow (implicit) orders among rules?
     \vspace{-0.2cm}\item Can rules in the TURS model learned from data be empirically treated as \emph{truly unordered}? 
     \vspace{-0.2cm}\item Do the class probability estimates from rules in the induced TURS model generalize well to unseen (test) instances, such that these probability estimates are reliable to serve as part of the explanations for the (probabilistic) predictions?
     \vspace{-0.2cm}\item Is the model complexity of the TURS model learned from data smaller than that of the rule-based models learned by competitor methods?
     \vspace{-0.2cm}\item What are the effects of our proposed heuristics, including the beam search with patience diversity and the MDL-based local test?
\end{enumerate}

\subsection{Setup}
\textbf{Datasets.} We conduct an extensive experiments with 31 datasets, summarized in Table~\ref{table:data_info}. Our multi-class datasets are from the UCI repository, while the binary-class datasets are from both the UCI repository~\cite{Dua:2019:UCI} and the ADBench Github repository \citep{han2022adbench}, marked as \textit{Italic} in Table~\ref{table:data_info}. The latter is a benchmark toolbox for anomaly detection (including imbalanced classification).

\noindent
\textbf{Competitors.} We compare against a wide ranges of methods, summarized as follows. First, we compare with \emph{unordered} CN2~\citep{clark1991cn2Improve}, which adopts the one-versus-rest strategy. As CN2 does not impose an implicit order among rules, it is conceptually the closest competitor to our method. Second, we compare with DRS~\citep{zhang2020diverseRuleSets} and IDS~\citep{lakkaraju2016interpretable}, as they are the only two multi-class rule set methods without first learning rules for individual class labels and then leveraging the one-versus-rest strategy, to the best of our knowledge. Further, similar to us, they also incorporate the properties of overlaps in their optimization scores: DRS aims to minimize the size of overlaps, while IDS optimizes a linear combination of seven scores, one of which explicitly penalizes the size of overlaps. Third, we compare with CLASSY, a recently proposed ordered rule list method, as it uses a similar model selection approach based on the MDL principle. Fourth, since the MDL principle is conceptually related to Bayesian modelling, we also compare with BRS~\citep{wang2017bayesian} as a representative method under the Bayesian framework, which also adopts an non-heuristic  simulated annealing approach. Last, we include RIPPER~\citep{cohen1995ripper}, CART~\citep{breiman1984classification}, and C4.5 decision trees~\citep{quinlan2014c4}, due to their wide use in practice. 

\noindent{\textbf{Implementation details.}} For TURS, we set the beam width as 10, and the number of candidate cut points for numeric features as 20\footnote{We observe that further increasing the number of candidate cut points for numeric features to 100, as well as the beam width to 20, makes no big difference on the predictive performance in general.}. 
For competitor algorithms, we use CN2 from Orange~\citep{JMLR:demsar13a}, IDS from a third-party implementation with proven scalability \citep{filip2019pyids}, RIPPER and C4.5 from Weka~\citep{hall2009weka} and its R wrapper, CART from Python's Scikit-Learn package~\citep{scikit-learn}, and finally, DRS, BRS, CLASSY from the original authors' implementations. Competitors algorithms' configurations are set to be the same as the default as in the paper and/or in original authors' implementations. We make the code public for reproducibility\footnote{\url{https://github.com/ylincen/TURS2}.}.

All reported results in this section are based on five-fold stratified cross-validation, unless mentioned otherwise. 


\begin{table}[ht]
\scalebox{0.85}{
\centering
\begin{tabular}{l|llllllll|l}
  \hline
Data & BRS & C45 & CART & CLASSY & Ripper & CN2 & DRS & IDS & TURS\tiny{(diff to best)} \\ 
   \hline
aloi & 0.519 & 0.398 & 0.621 & \textbf{0.654} & 0.485 & 0.569 & 0.500 & 0.509 & 0.619 \tiny{(-0.035)} \\ 
  backdoor & 0.917 & 0.990 & 0.979 & 0.996 & 0.976 & \textbf{0.997} & --- & --- & 0.995 \tiny{(-0.002)} \\ 
  backnote & 0.957 & 0.987 & 0.983 & 0.990 & 0.982 & \textbf{0.993} & 0.988 & 0.765 & 0.981 \tiny{(-0.012)} \\ 
  chess & 0.957 & \textbf{0.998} & 0.995 & 0.992 & 0.995 & 0.532 & 0.809 & 0.677 & 0.994 \tiny{(-0.004)} \\ 
  diabetes & 0.725 & 0.710 & 0.667 & 0.737 & 0.641 & 0.709 & 0.727 & 0.595 & \textbf{0.750} \tiny{(0)} \\ 
  glass-2 & 0.676 & 0.890 & 0.790 & 0.730 & 0.793 & 0.941 & 0.926 & 0.912 & \textbf{0.949} \tiny{(0)} \\ 
  ionosphere & 0.802 & 0.882 & 0.851 & 0.886 & 0.911 & \textbf{0.941} & 0.712 & 0.786 & 0.904 \tiny{(-0.037)} \\ 
  magic & 0.767 & 0.869 & 0.799 & \textbf{0.888} & 0.819 & 0.698 & 0.774 & 0.507 & 0.887 \tiny{(-0.001)} \\ 
  mammography & 0.644 & 0.817 & 0.730 & 0.890 & 0.582 & 0.891 & 0.857 & 0.535 & \textbf{0.897} \tiny{(0)} \\ 
  musk & \textbf{1.000} & 0.995 & \textbf{1.000} & \textbf{1.000} & \textbf{1.000} & \textbf{1.000} & --- & \textbf{1.000} & \textbf{1.000} \tiny{(0)}\\ 
  optdigits & 0.897 & 0.959 & 0.942 & 0.986 & 0.966 & \textbf{0.992} & --- & 0.960 & 0.977 \tiny{(-0.015)} \\ 
  pendigits-2 & 0.938 & 0.986 & 0.964 & 0.974 & 0.973 & \textbf{0.996} & 0.948 & 0.914 & 0.955 \tiny{(-0.041)} \\ 
  satimage-2 & 0.922 & 0.914 & 0.915 & 0.929 & \textbf{0.964} & \textbf{0.964} & 0.699 & 0.867 & 0.909 \tiny{(-0.055)} \\ 
  smtp & 0.596 & 0.930 & 0.965 & 0.905 & 0.950 & 0.853 & 0.889 & 0.879 & \textbf{0.972} \tiny{(0)} \\ 
  thyroid & 0.897 & 0.972 & 0.950 & 0.983 & 0.989 & \textbf{0.998} & 0.921 & 0.960 & 0.961 \tiny{(-0.037)} \\ 
  tic-tac-toe & \textbf{1.000} & 0.878 & 0.918 & 0.978 & 0.972 & 0.932 & 0.992 & 0.599 & 0.965 \tiny{(-0.035)} \\ 
  vowels & 0.854 & 0.693 & 0.773 & 0.796 & 0.758 & \textbf{0.897} & 0.813 & 0.748 & 0.817 \tiny{(-0.08)} \\ 
  waveform-2 & 0.567 & 0.716 & 0.648 & 0.847 & 0.333 & \textbf{0.886} & 0.540 & 0.774 & 0.832 \tiny{(-0.054)} \\ 
  wdbc & 0.836 & \textbf{0.999} & 0.896 & 0.843 & 0.899 & 0.836 & 0.620 & 0.942 & 0.947 \tiny{(-0.052)} \\ 
  \hline
  anuran & --- & 0.995 & 0.944 & 0.968 & \textbf{0.996} & 0.962 & 0.945 & 0.602 & 0.973 \tiny{(-0.023)} \\ 
  avila & --- & \textbf{0.999} & 0.977 & 0.987 & 0.993 & 0.920 & 0.729 & 0.617 & 0.990 \tiny{(-0.009)} \\ 
  car & --- & 0.956 & 0.939 & 0.978 & 0.931 & 0.885 & 0.935 & 0.831 & \textbf{0.980} \tiny{(0)} \\ 
  contracept & --- & \textbf{0.680} & 0.597 & 0.653 & 0.607 & 0.598 & 0.598 & 0.549 & 0.658 \tiny{(-0.022)} \\ 
  drybeans & --- & 0.970 & 0.943 & 0.977 & 0.979 & 0.929 & 0.975 & 0.591 & \textbf{0.989} \tiny{(0)} \\ 
  glass & --- & 0.970 & \textbf{0.984} & 0.975 & 0.940 & 0.937 & 0.926 & 0.793 & 0.967 \tiny{(-0.017)} \\ 
  heartcleveland & --- & 0.603 & 0.572 & \textbf{0.721} & 0.509 & 0.694 & 0.611 & 0.513 & 0.695 \tiny{(-0.026)} \\ 
  iris & --- & 0.960 & 0.975 & 0.970 & 0.962 & 0.977 & 0.954 & 0.810 & \textbf{0.981} \tiny{(0)}\\ 
  pendigits & --- & 0.982 & 0.974 & 0.986 & 0.983 & 0.991 & 0.967 & 0.522 & \textbf{0.994} \tiny{(0)}\\ 
  vehicle & --- & 0.856 & 0.789 & 0.870 & 0.859 & 0.858 & 0.764 & 0.579 & \textbf{0.882} \tiny{(0)}\\ 
  waveform & --- & 0.842 & 0.814 & 0.910 & 0.880 & 0.803 & 0.654 & 0.517 & \textbf{0.915} \tiny{(0)} \\ 
  wine & --- & 0.937 & 0.906 & 0.960 & 0.937 & \textbf{0.973} & 0.909 & 0.854 & 0.952 \tiny{(-0.021)} \\ 
   \hline

\end{tabular}
}
\caption{Average ROC-AUC scores obtained using cross-validation. BRS can only be applied to binary datasets, and DRS and IDS fail to get results on a few datasets, denoted as ``---" in the table. Best ROC-AUC for each dataset is shown in bold and. The difference between the best ROC-AUC for each dataset and the ROC-AUC of TURS for the same dataset is shown in bracket. } \label{table:roc_auc}
\end{table}

\begin{figure}[ht]
\includegraphics[width=\textwidth]{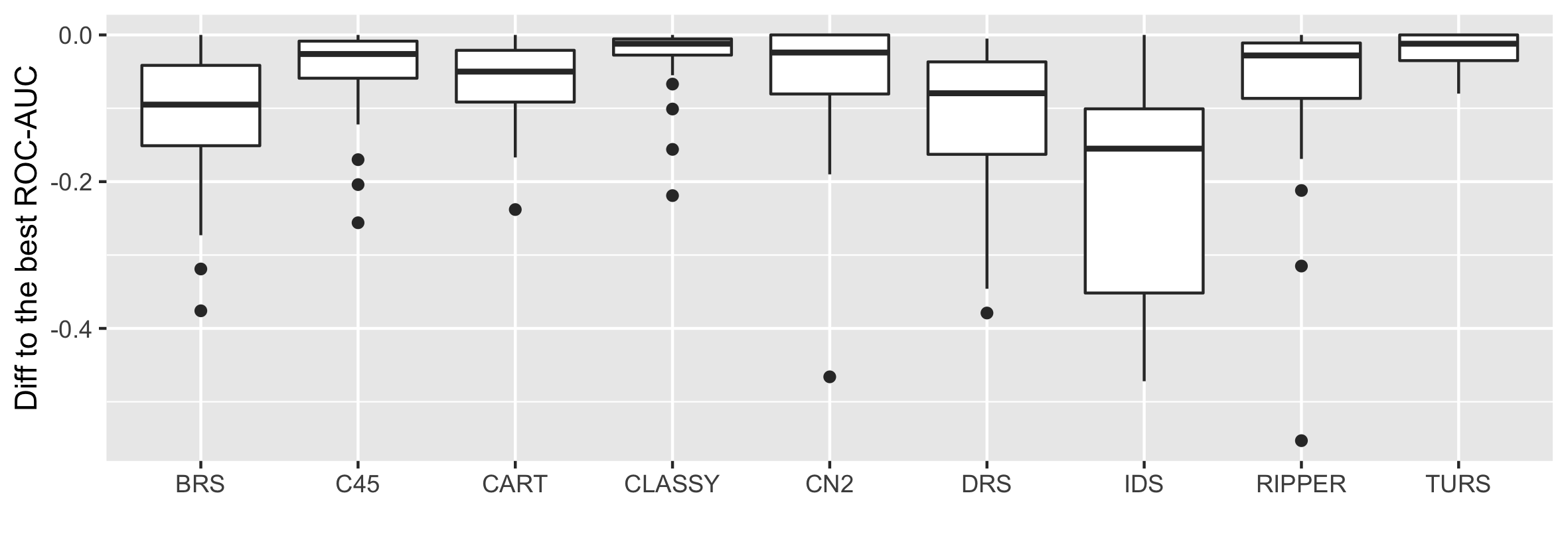}	
\caption{For each algorithm, we calculate for every individual dataset the difference between its ROC-AUC score and the best ROC-AUC scores. The differences to the best ROC-AUC scores for each algorithm is illustrated by a box-plot.} \label{fig:diff_to_best_auc}
\end{figure}

\subsection{Classification performance} \label{subsec:classifier_perf}
To investigate the classification performance for the TURS model learned from data, we report in Table~\ref{table:roc_auc} average ROC-AUC scores on the test sets obtained using five-fold \emph{stratified} cross-validation. For multi-class classification, we report the ``macro" one-versus-rest AUC scores, as ``macro" AUC treats all class labels equally and hence can characterize how well the classifiers predict for the minority classes. 

Note that BRS~\citep{wang2017bayesian} can only be applied to binary datasets. Further, we fail to obtain the results of DRS on three datasets because the implementation of DRS makes it incapable of handling datasets with very large number of columns\footnote{The key issue is that their implementation involves transforming a binary vector to an integer, and they use the ``Numpy" package for this, which does not support ``arbitrarily large integers". }. We also fail to obtain the result of IDS on one dataset as it exceeds the predetermined time limit: 10 hours for one single fold of one dataset. 

We show that TURS is very competitive in comparison to its competitors in the following aspects. First, TURS performs the best in 11 out of the total 31 datasets, and performs the best in 6 out of 11 multi-class datasets. We denote the best ROC-AUC for each dataset in bold. Second, we report the difference between TURS's ROC-AUC scores and the best ROC-AUC scores for each individual dataset, in the bracket in the table. This shows the gap between TURS and the best competitor for each individual dataset. 

We further calculate the ROC-AUC scores of each competitor algorithm for each dataset, minus the best ROC-AUC score for each individual dataset, which measures the ``gaps to the best" for each competitor algorithm. We compare these gaps-to-best scores for all competitor algorithms in Figure~\ref{fig:diff_to_best_auc}. The box-plots demonstrate that TURS is very stable for all 31 datasets we have tested, and in comparison to its competitors the gaps-to-best scores are much smaller. 

Third, among all rule set methods (CN2, DRS, IDS, TURS), TURS shows substantially superior performance against DRS and IDS. As DRS and IDS both aim to reduce the size of overlaps, our results indicate that simply minimizing the sizes of overlaps may impose a too restricted constraint and hence lead to sub-optimal classification performance. On the other hand, CN2 is competitive in terms of obtaining the best AUCs, especially for binary datasets, as shown in Table~\ref{table:roc_auc}. However,  as shown in Figure~\ref{fig:diff_to_best_auc}, CN2 has in general larger gaps to the best AUCs than TURS does. Further, more comparison between TURS and CN2 will be presented in the following paragraphs. 

\begin{table}[ht]
\centering
\small
\begin{tabular}{l|rrll|rrll}
  \hline
data & TURS & TURS-RP & Diff. & \%overlap & CN2 & CN2-RP & Diff. & \%overlap \\ 
  \hline
aloi & 0.619 & 0.62 & -0.001 & 4\% & 0.569 & 0.578 & -0.009 & 97\% \\ 
  anuran & 0.973 & 0.969 & 0.004 & 28\% & 0.962 & 0.913 & \textbf{0.048} & 90\% \\ 
  avila & 0.99 & 0.989 & 0.001 & 17\% & 0.92 & 0.915 & 0.004 & 45\% \\ 
  backdoor & 0.995 & 0.995 & 0 & 0\% & 0.997 & 0.976 & \textbf{0.021} & 96\% \\ 
  backnote & 0.981 & 0.98 & 0.001 & 20\% & 0.993 & 0.973 & \textbf{0.019} & 60\% \\ 
  drybeans & 0.989 & 0.986 & 0.004 & 34\% & 0.929 & 0.908 & \textbf{0.021} & 94\% \\ 
  glass-2 & 0.949 & 0.949 & 0 & 0\% & 0.941 & 0.839 & \textbf{0.102} & 33\% \\ 
  heartcleveland & 0.695 & 0.687 & 0.008 & 8\% & 0.694 & 0.663 & \textbf{0.031} & 61\% \\ 
  mammography & 0.897 & 0.897 & 0 & 5\% & 0.891 & 0.806 & \textbf{0.084} & 86\% \\ 
  musk & 1 & 1 & 0 & 0\% & 1 & 1 & 0 & 0\% \\ 
  optdigits & 0.977 & 0.977 & 0 & 0\% & 0.992 & 0.972 & \textbf{0.02} & 92\% \\ 
  pendigits-2 & 0.955 & 0.955 & 0 & 0\% & 0.996 & 0.972 & \textbf{0.024} & 88\% \\ 
  satimage-2 & 0.909 & 0.909 & 0 & 0\% & 0.964 & 0.909 & \textbf{0.055} & 89\% \\ 
  smtp & 0.972 & 0.972 & 0 & 0\% & 0.853 & 0.795 & \textbf{0.058} & 51\% \\ 
  thyroid & 0.961 & 0.961 & 0 & 0\% & 0.998 & 0.941 & \textbf{0.056} & 87\% \\ 
  vehicle & 0.882 & 0.878 & 0.004 & 15\% & 0.858 & 0.826 & \textbf{0.033} & 77\% \\ 
  vowels & 0.817 & 0.817 & 0 & 1\% & 0.897 & 0.838 & \textbf{0.059} & 71\% \\ 
  waveform-2 & 0.832 & 0.832 & 0 & 9\% & 0.886 & 0.754 & \textbf{0.132} & 92\% \\ 
  wdbc & 0.947 & 0.947 & 0 & 0\% & 0.836 & 0.596 & \textbf{0.241} & 69\% \\ 
  car & 0.98 & 0.98 & 0.001 & 22\% & 0.885 & 0.794 & \textbf{0.091} & 91\% \\ 
  chess & 0.994 & 0.994 & 0 & 23\% & 0.532 & 0.551 & \textbf{-0.019} & 95\% \\ 
  contracept & 0.658 & 0.657 & 0.001 & 3\% & 0.598 & 0.572 & \textbf{0.026} & 100\% \\ 
  diabetes & 0.75 & 0.748 & 0.002 & 11\% & 0.709 & 0.676 & \textbf{0.033} & 82\% \\ 
  glass & 0.967 & 0.965 & 0.002 & 2\% & 0.937 & 0.937 & 0 & 0\% \\ 
  ionosphere & 0.904 & 0.904 & 0 & 15\% & 0.941 & 0.895 & \textbf{0.046} & 55\% \\ 
  iris & 0.981 & 0.98 & 0.001 & 5\% & 0.977 & 0.977 & 0 & 0\% \\ 
  magic & 0.887 & 0.887 & 0 & 38\% & 0.698 & 0.738 & \textbf{-0.04} & 92\% \\ 
  pendigits & 0.994 & 0.991 & 0.003 & 40\% & 0.991 & 0.982 & 0.009 & 76\% \\ 
  tic-tac-toe & 0.965 & 0.965 & 0 & 7\% & 0.932 & 0.925 & 0.007 & 49\% \\ 
  waveform & 0.915 & 0.905 & 0.009 & 51\% & 0.803 & 0.84 & \textbf{-0.037} & 77\% \\ 
  wine & 0.952 & 0.952 & 0 & 0\% & 0.973 & 0.971 & 0.002 & 2\% \\ 
  \hline
  \end{tabular}
\caption{Average ROC-AUC for the predictions with and without ``random picking", both for TURS and CN2. The difference between the two ROC-AUC scores are shown in bold if the difference is larger than 0.01. We further show the percentage of instances covered by more than one rule, denoted as $\%overlap$.} \label{table:roc_auc_rp}
\end{table}

\subsection{Prediction with `random picking' for overlaps} \label{subsec:exp_rp}
Recall that in our definition of the truly unordered rule set (TURS) model, we estimate the class probabilities for overlaps by considering the ``union" of the covers of all involved rules. Thus, the next question we study empirically is whether our formalization of rule sets as probabilistic models can indeed lead to overlaps only formed by rules with similar probabilistic estimates. 

Therefore, we compare the probabilistic predictions of our TURS models against the probabilistic predictions by what we call ``random picking" for overlaps: when an unseen instance is covered by multiple rules, we randomly pick one of these rules, and use its estimated class probabilities (estimated from the training set) as the probabilistic prediction for this instance. 

Intuitively, if the overlaps are formed only by rules with similar probabilistic output, we expect the probabilistic prediction performance by TURS and by ``TURS with random-picking" (abbreviated as TURS-RP) to be very close. We report the ROC-AUC of TURS and TURS-RP in Table~\ref{table:roc_auc_rp}, together with the percentage of instances covered by more than one rules (the ``\%overlaps"  column). The ROC-AUC scores are obtained using five-fold cross-validation, and specifically, for each fold, the ``random picking" ROC-AUC is obtained by averaging the ROC-AUC scores obtained by 10 random picking probabilistic predictions. 

We benchmark the ROC-AUC scores against those of CN2 (IDS and DRS are excluded due to their sub-optimal performance in general). We have shown that the differences between the ROC-AUC of TURS and TURS-RP are all negligible up to the second decimal (i.e., smaller than $0.01$), while the differences between the ROC-AUC of CN2 and CN2-RP are mostly larger than $0.01$ (shown in bold), among which eight are larger than $0.05$. 




We can hence conclude that, while CN2 relies heavily on its conflict resolving schemes for overlaps, TURS produces overlaps only formed by probabilistic rules with very similar probability estimates. This indicates our probabilistic rules can be viewed as \emph{truly unordered in the sense that, when an instance is covered by multiple rules, the rule chosen to predict class probabilities has little effect on the prediction performance.} 

\begin{figure}[ht]\centering
\includegraphics[width=\textwidth]{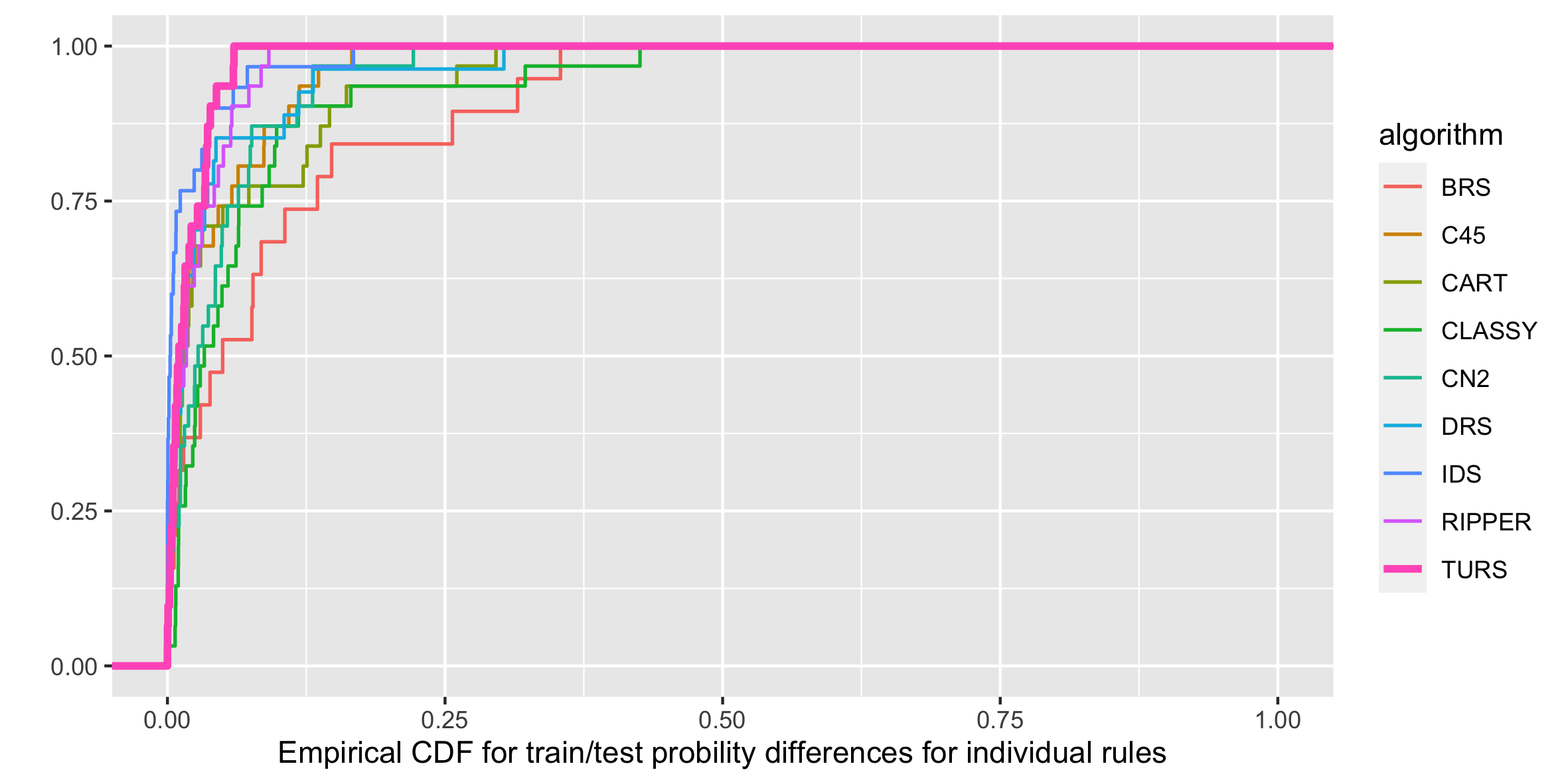}
\caption{The weighted average of the differences between the class probability estimates of every individual rule for training and test sets, shown as the empirical cumulative distribution function, in which the weight is defined as the coverage of each rule for the training set.}
\label{fig:train_test_diff}
\end{figure}

\subsection{Generalizability of local probabilistic estimates}
While rule-based models are commonly considered to be intrinsically explainable models, we argue that only rules with probability estimates that generalize well can serve as trustworthy explanations. Thus, we next examine the difference between individual rules' probability estimates on the train and test sets. 

Specifically, given a rule set induced from a specific dataset, we look at each individual rule's probability estimates, estimated from the training and test set respectively, by the maximum likelihood estimator. 
Finally, we report the weighted averages of the probability estimates differences for all rules, weighted by the coverage of each rule on the training set. 

Formally, given a rule set with $K$ rules, $M = \{S_1, ..., S_K\}$, denote the probability estimates of all rules by $(\mathbf{p}_1, ..., \mathbf{p}_K)$ and $(\mathbf{q}_1, ..., \mathbf{q}_K)$, respectively estimated from the training and test set. 
Assume each probability estimate has length $C$ (i.e., $C=2$ for binary target and $C > 2$ for multi-class target), 
we measure how well the individual rules generalize by
\begin{equation}\label{eq:train_test_diff}
g = \frac{1}{K} \sum_{j} |S_j| \left(\sum_c \frac{1}{C}|\mathit{p}_{jc} - \mathit{q}_{jc}|\right)
\end{equation}
in which $\mathit{p}_{jc}$ ($\mathit{q}_{jc}$) is the $c$-th element of vector $\mathbf{p_j}$ ($\mathbf{q_j}$). 
Note that each individual rule is treated separately in calculating the $g$-score above, and hence the overlaps do not play a role here. 

We calculate this score for all algorithms and all datasets, averaged using the five-fold stratified cross-validation, and we present the results with empirical cumulative density functions (ECDF) in Figure~\ref{fig:train_test_diff}. Since the position of the curve towards the upper-left shows that the corresponding algorithm has small probability estimate differences between training and test sets, we observe that TURS (the bold curve) dominates rule sets learned by the rest of the algorithms, with IDS the only close competitor. 

For some datasets, IDS learns rule sets that have smaller probability estimation differences than the TURS model (shown by the fact that part of the corresponding blue curve is above the curve of TURS in bold). However, this indicates that IDS has serious ``under-fitting" if we take into consideration IDS's suboptimal predictive performance as discussed in Section~\ref{subsec:classifier_perf}. That is, IDS produces rules with too large coverage, and hence is not specific and refined enough for classification, although rules with large coverage have probability estimates that generalize well. 




Thus, in conclusion, rules in the TURS model learned by our algorithm are equipped with more reliable and trustworthy class probability estimates, in comparison to the other eight tree- and rule-based models. 


\begin{table}[ht]
\small
\centering
\begin{tabular}{l|llllllll|l}
  \hline
Data & BRS & C45 & CART & CLASSY & RIPPER & CN2 & DRS & IDS & TURS \\ 
  \hline
aloi & \textbf{3} & \tiny{2659.1} & 26952.8 & 52.3 & \tiny{26.2} & 2116 & \tiny{0} & \tiny{14} & 66.5 \\ 
  backdoor & \textbf{13} & 701.5 & 2460.5 & 72.6 & 101.5 & 259.6 & --- & --- & 59.1 \\ 
  backnote & 35.8 & 79.6 & 116.7 & 22.7 & 22.3 & 39.5 & 54 & \tiny{12.5} & \textbf{14.2} \\ 
  chess & \textbf{19.2} & 250.2 & 340.5 & 33 & 57.9 & \tiny{297} & \tiny{54.2} & \tiny{14.5} & 58 \\ 
  diabetes & 15.6 & 107.6 & 700.5 & \textbf{5} & \tiny{6.6} & 165.4 & 82.5 & \tiny{13.2} & 6.8 \\ 
  glass-2 & \tiny{10.8} & 19.7 & \tiny{6.7} & \tiny{3} & \tiny{5.9} & 1.8 & 37.2 & 15.5 & \textbf{1} \\ 
  ionosphere & \tiny{31.2} & 59 & 86.6 & 5.6 & 12.5 & 25.1 & \tiny{440.7} & \tiny{12} & \textbf{5.1} \\ 
  magic & \tiny{43.5} & 3211.8 & 20053.1 & 228.4 & \textbf{87.1} & \tiny{3351.9} & \tiny{44.7} & \tiny{18.5} & 227.8 \\ 
  mammography & \tiny{3} & 273.7 & \tiny{1126.8} & 38.7 & \tiny{30.5} & 199.2 & \textbf{24.4} & \tiny{14} & 37.4 \\ 
  musk & 6 & 4 & \textbf{2} & \textbf{2} & \textbf{2} & \textbf{2} & --- & 9.3 & \textbf{2} \\ 
  optdigits & 54.5 & 46.6 & 78.7 & 9.6 & 15.8 & 16.7 & --- & 11.2 & \textbf{7.6} \\ 
  pendigits-2 & 14.8 & 48.3 & 66.9 & \textbf{9.5} & 12.7 & 19.9 & 136.5 & 14.1 & 12 \\ 
  satimage-2 & 15 & 14.8 & 17.8 & 7.9 & 9 & 9.9 & \tiny{524.8} & 12.4 & \textbf{4} \\ 
  smtp & \tiny{3} & 20.9 & 34.4 & 5.6 & 7 & \tiny{19.2} & 16.7 & 13.9 & \textbf{3} \\ 
  thyroid & \textbf{3} & 18.9 & 28.1 & 16.4 & 4.2 & 9.7 & 73.1 & 13.4 & 7.8 \\ 
  tic-tac-toe & \textbf{25.2} & 411.9 & 410.8 & 29.2 & 42.3 & 81.1 & 101.7 & \tiny{13.5} & 28.9 \\ 
  vowels & 25.2 & \tiny{58.9} & 147.8 & 14.1 & 13.5 & 22.6 & 141.9 & 14.3 & \textbf{8.7} \\ 
  waveform-2 & \tiny{4.2} & \tiny{165.6} & \tiny{406.3} & 19.3 & \tiny{20.5} & 79.4 & \tiny{522.2} & \textbf{11.2} & 12 \\ 
  wdbc & \tiny{9} & 8 & 2.5 & \tiny{3.2} & \textbf{2} & \tiny{4.4} & \tiny{337.6} & 7.6 & \textbf{2} \\ 
  \hline
  anuran & --- & 80.9 & 1284.3 & 90.7 & \textbf{11} & 264.8 & 346.1 & \tiny{12.4} & 100.2 \\ 
  avila & --- & 3460.4 & 7795.9 & 939.8 & \textbf{707.2} & 1297.4 & \tiny{181.3} & \tiny{14} & 726 \\ 
  car & --- & 659.8 & 776.1 & \textbf{56.4} & 171.7 & 72 & 307.6 & \tiny{13.5} & 132 \\ 
  contracept & --- & 1268.1 & 5536.2 & 12 & 14.7 & 221.8 & \textbf{5} & \tiny{16.8} & 12.6 \\ 
  drybeans & --- & 2481.6 & 7039.4 & \textbf{105.8} & 153.3 & 1280.6 & 202.3 & \tiny{13.2} & 182.4 \\ 
  glass & --- & 24.8 & 20 & \textbf{5.2} & 14 & 7.8 & 171 & \tiny{11.3} & 6 \\ 
  heartcleveland & --- & 245.7 & \tiny{410.2} & 6.6 & \tiny{5.5} & 42.1 & 522.2 & \tiny{14} & \textbf{5.7} \\ 
  iris & --- & 15.2 & 13 & 2.8 & 5.9 & 8.4 & 26.7 & \tiny{10.4} & \textbf{2.3} \\ 
  pendigits & --- & 1274.9 & 1689.7 & \textbf{142.3} & 260.9 & 430.4 & 561.7 & \tiny{15.6} & 174.5 \\ 
  vehicle & --- & 560.6 & 760.3 & 25.4 & 46.9 & 175.5 & \tiny{741.1} & \tiny{11.6} & \textbf{23} \\ 
  waveform & --- & 2782.9 & \tiny{3520} & \textbf{125.7} & 143 & \tiny{970.5} & \tiny{75.6} & \tiny{14.4} & 160.6 \\ 
  wine & --- & 22 & 15.5 & 5 & 8.9 & 7.5 & 103.8 & 14.4 & \textbf{4.6} \\ 
   \hline
\end{tabular}
\caption{Total number of literals in the rule set, rule list, or decision tree. 
Smaller fonts indicate that the model learned by a certain algorithm gives the ROC-AUC score substantially worse than TURS does, and the results in bold indicate the smallest total number of literals (excluding those models with substantially worse ROC-AUC scores). }
\label{table:num_literal}
\end{table}
\begin{figure}[ht]
\includegraphics[width=\textwidth]{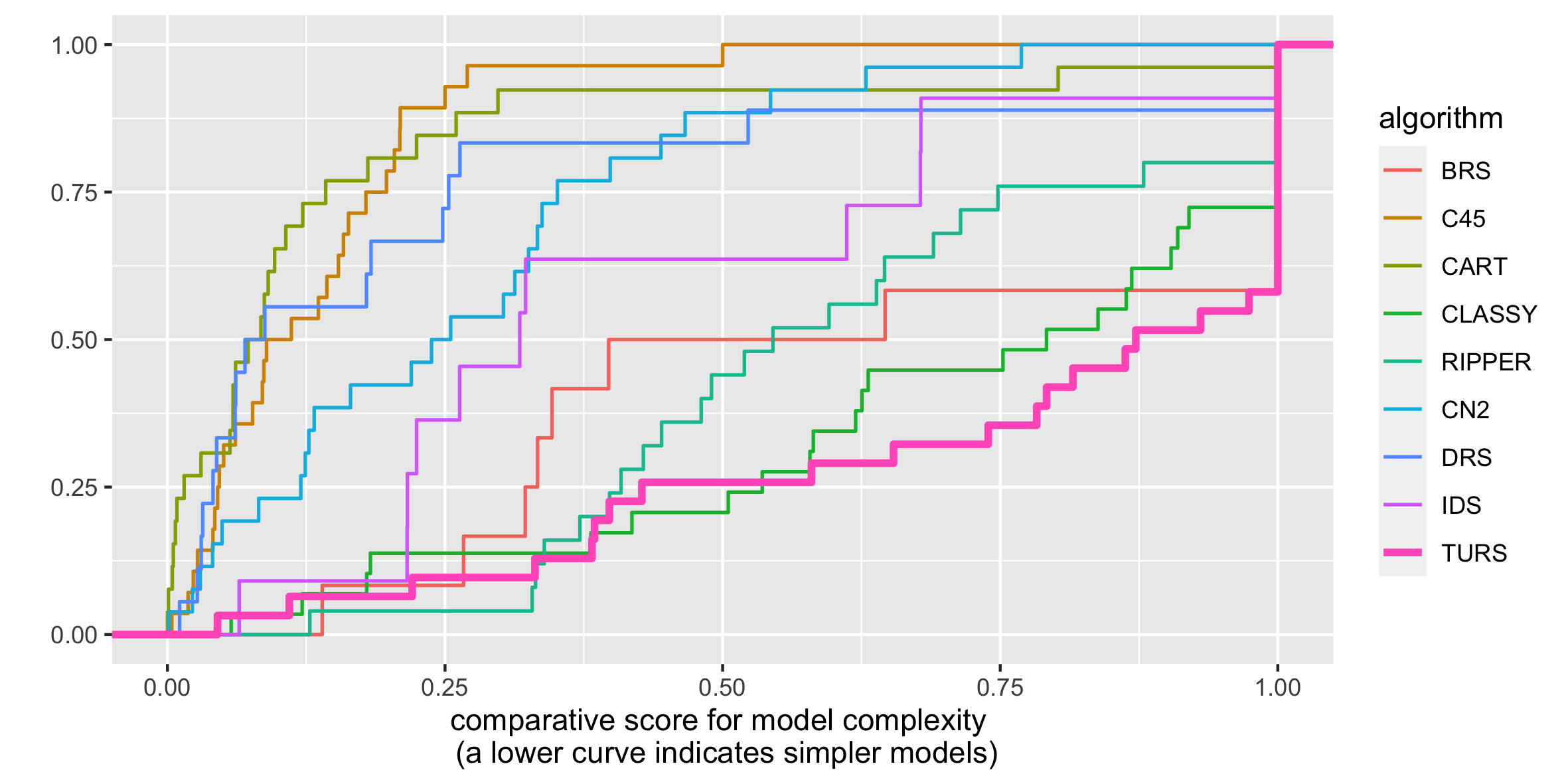}
\caption{Empirical cumulative distribution function for the comparative score for model complexity. Curves towards the bottom-right indicate larger comparative scores and simpler models. }
	\label{fig:heatmap_modelcomplexity}
\end{figure}

\subsection{Model complexity}
We study next whether TURS empirically leads to more complex rule sets given that it allows overlaps formed by rules with similar probabilistic outputs only. 

We measure the model complexity by the number of total literals for each model: i.e., summing up the lengths of rules in a rule set, rule list, or decision tree (by treating each tree path as rule), which directly indicates the workload for a domain expert if they read the rules. 
We report this measure in Table~\ref{table:num_literal}, and specifically, we mark the results from the models with substantially worse ROC-AUC scores than those of TURS by denoting them in \emph{smaller} font sizes. Precisely, for a given dataset, all competitor models with more than $0.1$ smaller ROC-AUC scores than that of TURS are marked. Excluding the results from these models, we observe that TURS produces the simplest model for 13 out 31 datasets. The model complexity of all simplest models are denoted in bold in Table~\ref{table:num_literal}. 



Further, to illustrate the differences between the number of literals across all algorithms, we calculate a comparative score as follow: for each individual dataset, we divide the minimum total number of literals by the total number of literals of each algorithm. This score show that, for each pair of dataset and algorithm, what is the ratio of the minimum number of literals for this dataset, over the number of literals for the algorithm-dataset pair, i.e., \emph{larger scores indicate simpler models as the minimum number of literals is the numerator}. We plot the ECDF of these comparative scores in Figure~\ref{fig:heatmap_modelcomplexity}, excluding the comparative scores obtained from models with substantially worse ROC-AUC scores than that of TURS, same as above.  
We observe that TURS lies at the most bottom-right, dominating the other competitors, since curves towards the bottom-right indicate larger comparative scores and hence simpler models.



\subsection{Ablation study 1: diverse patience beam search} \label{subsec:exp_patience}
\begin{figure}[ht] 
 \includegraphics[width=\textwidth]{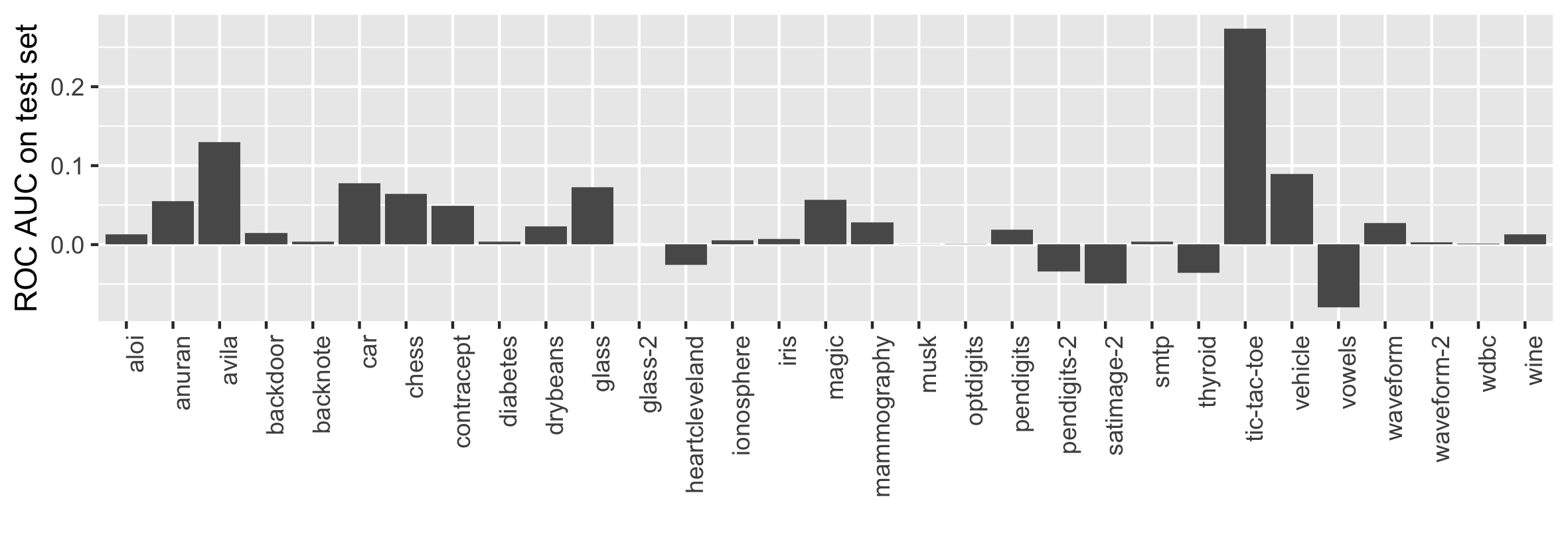}
 \caption{The differences between the ROC-AUC scores on the test sets with and without the diverse patience.}
 \label{fig:diverse_coverage}
\end{figure}
We study the effect of using the beam search with the ``diverse patience", by replacing it with a ``normal" (non-diverse) beam search. Suppose the beam width is $W$, we then pick the top-$W$ rule growth candidates \emph{without categorizing rule growth candidates by their coverage}. That is, we ``turn off" the diverse coverage constraints both for updating the beam and the auxiliary beam. 

As shown in Figure~\ref{fig:diverse_coverage}, when using the diverse coverage heuristic, the ROC-AUC on the test sets (points and curve in orange) becomes better on 25 out of 31 datasets, demonstrating the benefits for predictive performance. 

\subsection{Ablation study 2: MDL-based local testing}\label{subsec:exp_local_constraint}
\begin{table}[ht]
\centering

\scalebox{0.85}{
\begin{tabular}{llllll}
  \hline
Local testing & \# rules & rule length & ROC-AUC & MDL-based score & train/test prob. diff. \\ 
  \hline
No & 12.48($\pm$1.56) & 5.597($\pm$0.42) & 0.722($\pm$0.02) & 2191.189($\pm$65.91) & 0.049($\pm$0.01) \\ 
  Yes & 1($\pm$0) & 1($\pm$0) & 0.724($\pm$0.01) & 2050.087($\pm$68.88) & 0.007($\pm$0) \\ 
   \hline\end{tabular}
   }
\caption{Results of ablation study on local testing. We report the mean ($\pm$ standard deviation) over $100$ repetitions. } \label{table:ablation_simu}
\end{table}
\begin{figure}[ht]
	\includegraphics[width=\textwidth]{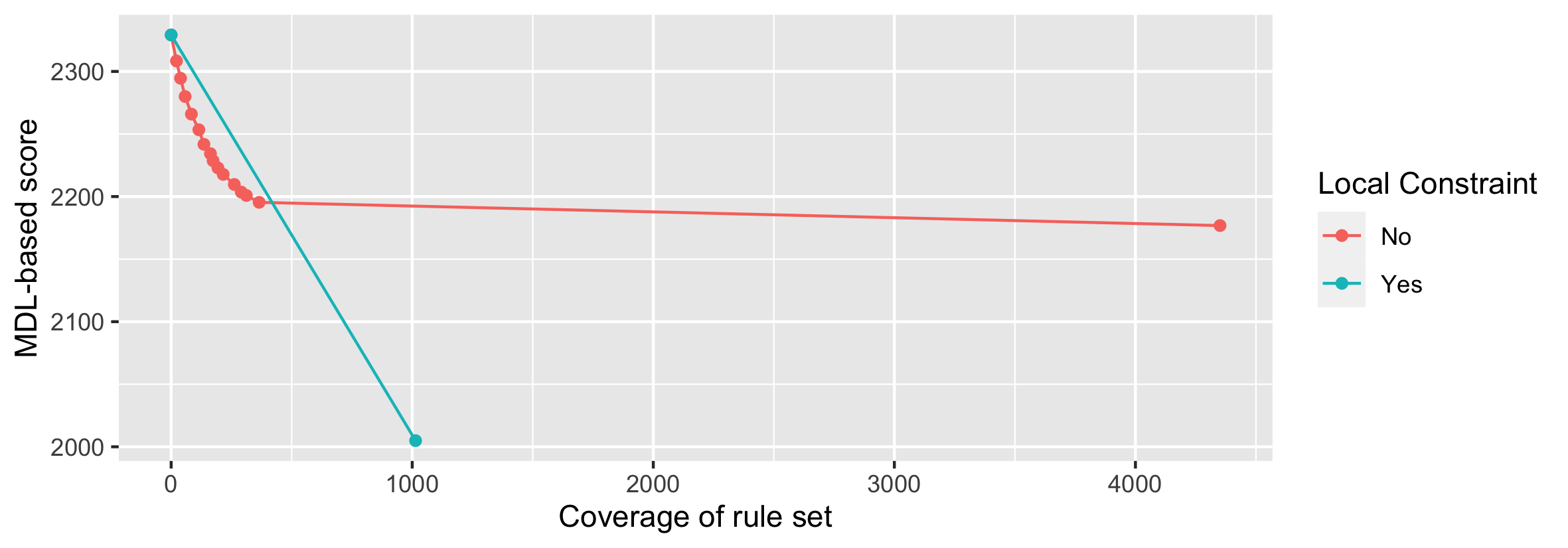}
	\caption{The process of adding rules to the rule set, with and without the local testing heuristics, using the first  dataset among the 100 simulated datasets. Each point represents the status after a single rule is added, with the x-axis representing the coverage of the (potentially incomplete) rule set after adding this rule, and the y-axis representing the MDL-based score. } \label{fig:ablation_simulation}
\end{figure}
Recall that the MDL-based local test is used for evaluating the ``potential" in the left out instances when growing a rule. Thus, from the perspective of optimization, it is used for looking ahead to prevent ending up in a local minimum when optimizing our MDL-based model selection criterion as defined in Equation~(\ref{eq:global_score}).

We next illustrate that, without the local test, our algorithm would fail to reveal the ground-truth rule set model even for a very simple simulated dataset. Instead, it would learn a much more complicated model, and consequently, our optimization algorithm would end up at an inferior minimum. 

Consider a simulated dataset generated by a known ground-truth rule set model with one rule only as follows. The feature variables are denoted as $X = (X_1, ..., X_{50})$, which are assumed to be all binary. We sample $X_1 \sim Ber(0.2)$, $X_i \sim Ber(0.5) (i = 2, ..., 50)$, in which $Ber(.)$ denotes the Bernoulli distribution. Further, we consider binary target variable $Y$ and sample $Y|X_1 = 1 \sim Ber(0.7)$ and $Y|X_1 = 0 \sim Ber(0.95)$. That is, $X_1 = 1$ (or $X_1 = 0$) is the only ``true rule" in this simulated dataset. 

We simulate the dataset with sample size $5\,000$ for $100$ times, and run TURS with and without local testing. As shown in Table~\ref{table:ablation_simu}, without local testing we achieve a worse (larger) score for our optimization function (i.e., the MDL-based score). 

Notably, although the ROC-AUC scores are similar for using and not using the local testing, the ground truth model is only found when local testing is used. When the local testing is disabled, the number of rules and the rule lengths are both not consistent with the ``true" model, as irrelevant variables are picked when growing the rules. We have two perspectives to explain the inconsistency. 

To begin with, as shown in Table~\ref{table:ablation_simu}, when the local testing is \emph{not} used, the difference between the class probabilities estimated from the training and test dataset is larger than the difference when the local testing is imposed, which indicates that the rules as local probabilistic models generalize worse when the local test heuristic is turned off. In other words, we observe overfitting locally. 

Further, as we wrote as motivation in Section~\ref{subsec:local_local testing}, the local testing heuristic is designed to prevent leaving out instances that are difficult to cover for `future' rules, and we do notice this phenomenon empirically. Specifically, for a single run of TURS on the simulated dataset, 
we plot in Figure~\ref{fig:ablation_simulation} the procedure of iteratively searching for the next best rule: each point represents the status of the rule set after a single rule is added, with the x-axis representing the coverage of the rule set (i.e., the number of instances covered by at least one of the rules excluding the else-rule), and the y-axis representing the MDL-based score for the rule set as a whole model. Thus, our learning speed score, defined in Section~\ref{subsec:single_rule}, basically tries to iteratively find the next point (i.e., the next rule) in Figure~\ref{fig:ablation_simulation} with the steepest slope. 

However, without the local test heuristic, although the learning speed scores (shown by the red curve in Figure~\ref{fig:ablation_simulation}) are in the first place larger than that of the blue curve (for the case when the local testing is used), the red curve achieves an inferior optimization result in the end. That is, without local testing, the instances that are left out are simply ignored during the process of rule growing, which leads to a worse optimization result. 

\subsection{Runtime}
\begin{figure}[ht] \label{fig:runtime}
	\includegraphics[width=\textwidth]{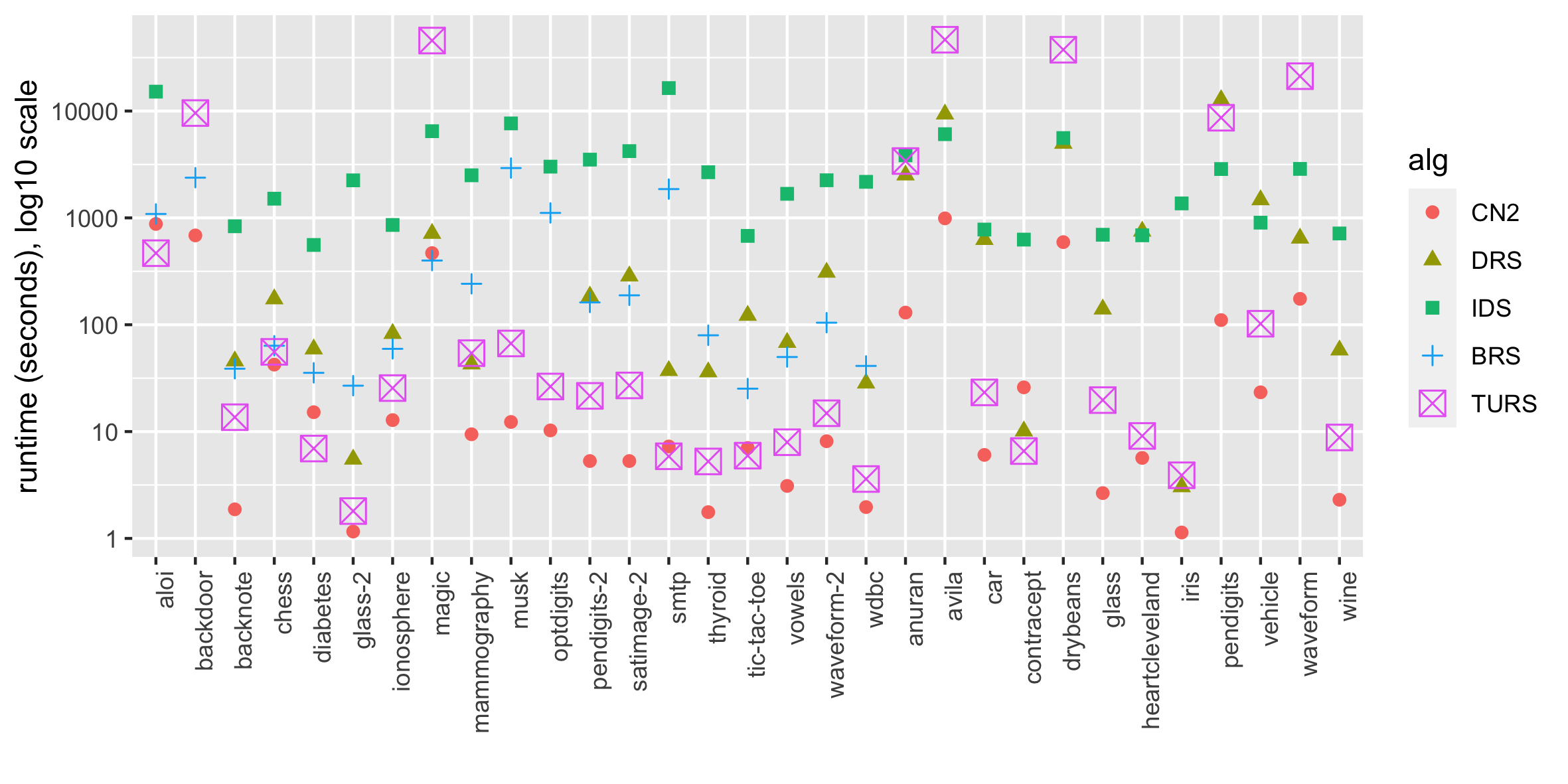}
	\caption{Average runtime for five rule set methods. The y-axis is scaled by $log_{10}( \cdot )$.}	 
	\label{fig:heatmap_modelcomplexity}
\end{figure}
Last, we report the runtime of TURS, together with rule set competitor methods only, as decision trees/lists methods from mature software (Weka and Python Scikit-Learn) are highly optimized in speed and are known to be very fast. 

We illustrate average the runtime (in seconds) obtained using cross-validation in Figure~\ref{fig:runtime}. In general, the runtime of TURS is competitive among all rule set methods except for CN2. CN2 seems faster in general and scales better to larger datasets, which can be caused both by a more efficient implementation (from the software ``Orange3"), and by its algorithmic properties (a greedy and separate-and-conquer approach). However, as we saw in Section~\ref{subsec:exp_rp}, rule sets learned from data by CN2 cannot be empirically treated as truly unordered. 



\section{Conclusion}
We studied the problem of learning truly unordered rule sets from data. While existing rule set methods adopt post-hoc schemes to resolve conflicts caused by overlapping rules, we proposed the intuitive idea of only ``allowing" rules to overlap if they have similar probabilistic output. Building upon this, we formally defined a truly unordered rule set (TURS) model: given a set of rules and a dataset (assumed i.i.d.), the TURS model defines the likelihood of the class labels given the feature values. 

Further, we formalized the problem of learning such TURS model from data as a probabilistic model selection problem, by leveraging the minimum description length (MDL) principle. Our MDL-based model selection criterion can strike a balance between the goodness-of-fit and the model complexity without any regularization parameter. 



We further proposed a carefully designed dual-beam diverse-patience algorithm to learn the TURS model from data. We showed that our algorithm can induce rules with competitive performance with respect to the following aspects. First, we benchmarked our algorithm using a large number of datasets and showed that the learned TURS model has very competitive predictive performance measured by the ROC-AUC. Specifically, in comparison to other multi-class rule set methods (CN2, DRS, IDS), the TURS model learned by our algorithm shows clear superiority with respect to the ROC-AUC scores. Second, uniquely, we showed that the TURS model learned by our algorithm is empirically truly ordered, in the sense that the predictive performance is hardly affected when predicting instances covered by multiple rules through a randomly picked rule among these multiple rules. Third, the learned TURS model contains single rules with reliable and trustworthy class probability estimates that can generalize well to the unseen instances. Fourth, the model complexity of the learned TURS model is also competitive in comparison to other rule-based methods. 

For future work, we consider using TURS as a building block towards designing interactive rule learning algorithms with humans in the loop, since rules being truly unordered instead of entangled are more comprehensible and easier to edit. That is, comprehending and editing single rules in the TURS model does not require domain experts or data analysts to consider other (potentially many, higher-ranked) rules. In sensitive area like health care, this may help build trust between the data-driven models and domain experts. 

In addition, extending truly unordered rule sets to other machine learning tasks such as feature construction, subgroup discovery, regression with uncertainty, and explaining black-box models are all promising directions.


\acks{This work is part of the research program ‘Human-Guided Data Science by Interactive Model Selection’ with project number 612.001.804, which is (partly) financed by the Dutch Research Council (NWO).}


\newpage
\appendix
\section*{Appendix A: Proofs} \label{sec:appendix_a}
\subsection*{Proof of proposition 1}
\primelemma*

\begin{proof}
The numerators are the same, and hence we only need to show that the denominators are the same. Assume there are $K$ rules in $M$ in total, 
\begin{equation} 
\begin{split}
		& \sum_{z^n \in \mathscr{Y}^n} P_{M, \htheta(x^n, z^n)}(z^n|x^n) = \sum_{z^n} \prod_{S\in M} \hat{P}_S(y^S|X^S) \\
		& = \sum_{z^n} \hat{P}_{_{S_1}}(z^{S_1}|x^{S_1}) \ldots \hat{P}_{_{S_K}}(z^{S_{K}}|x^{S_{K}}) \\
		& = \sum_{z^{S_1}} \ldots \sum_{z^{S_{K}}} \left(\hat{P}_{_{S_1}}(z^{S_1}|x^{S_1}) \ldots \hat{P}_{_{S_K}}(z^{S_{K}}|x^{S_{K}}) \right)\\
		& = \Bigg(\sum_{z^{S_1}} \ldots \sum_{z^{S_{K-1}}} \hat{P}_{_{S_1}}(z^{S_1}|x^{S_1}) \ldots \hat{P}_{_{S_{K-1}}} (z^{S_{K-1}}|x^{S_{K-1}}) \Bigg) \left(\sum_{z^{S_{K}}}  \hat{P}_{_{S_K}}(z^{S_{K}}|x^{S_{K}})\right) \\
				& \ldots \\
		& = \left(\sum_{z^{S_{1}}}  \hat{P}_{_{S_{1}}}(z^{S_{1}}|x^{S_{1}})\right) \ldots \left(\sum_{z^{S_{K}}}  \hat{P}_{_{S_K}}(z^{S_{K}}|x^{S_{K}})\right)\\
		& = \prod_{S \in M} \sum_{z^{S}}  \hat{P}_{_{S}}(z^{S}|x^{S}) \\
		& = \prod_{S \in M} \mathcal{R}(|S|, |\mathscr{Y}|),
\end{split}
\end{equation}
which completes the proof.
\end{proof}

\subsection*{Proof of proposition 2}
\secondlemma*

\begin{proof}
	The proof directly follows from Theorem~3 of~\citep{silander2008factorized}. Firstly, it has been proven that $\log \mathcal{R}(|S|, |\mathscr{Y}|) = \frac{|\mathscr{Y}| - 1}{2} \log |S| + \mathcal{O}(1)$~\citep{rissanen1996fisher}. Next, under the mild assumption that $|S|$ grows linearly as the full sample size $n$, we have $\log |S| = \log ((\gamma + o(1))n) = \log n + \mathcal{O}(1)$. Hence, $\log \prod_{S \in M} \mathcal{R}(|S|, |\mathscr{Y}|) = \sum_{S}\log \mathcal{R}(|S|, |\mathscr{Y}|)$ $= \frac{K(|\mathscr{Y}| - 1)}{2} \log n + \mathcal{O}(1)$, which completes the proof. 
\end{proof}

\section*{Appendix B: Comparison to Our Previous Work} \label{sec:appendix_b}
As this paper is based on our previous work~\citep{yang2022truly}, we hereby summarize the main changes and additions as follows. First, while working on the follow-up real-world case study in health care, we noticed an unsatisfactory prediction performance of our previous method. After careful investigation, we realized it was the algorithmic heuristics that could be further improved. Specifically, the previous method used a heuristic motivated by the FOIL's information gain~\citep{furnkranz2012foundations}, i.e., an MDL-based Foil-like compression gain; however, we later noticed extending the FOIL's information gain to multi-class situations will cause problem when using it to guide the search for rule growth, since it can be proven that the FOIL's information gain will only lead to rules with lower empirical entropy than the rules in the previous step. This specifically can cause problems when the dataset is noisy (in the sense that the Bayes-optimal classifier cannot achieve a perfect or near-perfect classification) and/or imbalanced. Therefore, we now implement a \emph{learning-speed} heuristic, motivated by the ``normalized gain" used in the CLASSY algorithm for rule lists~\citep{proencca2020interpretable}; however, as we observe ``normalized gain" often shrinks of the rule's coverage (the number of instances covered by the rule) too fast, we further introduced a diverse beam search with diverse ``patience", in which the concept of patience is motivated from the PRIM method~\citep{friedman1999bump}, one of the first pioneer works for regression rules. 

Second, one unique challenge of learning truly unordered rules is to both evaluate the quality of individual rules and the quality of the overlaps (i.e., whether the rules that form the overlap do not have similar enough outputs). However, this makes existing rules obstacles for the following search for more rules, as we elaborate in Section~\ref{subsec:auxiliary_beam}. In our previous work, we adopted a  ``two-stage" algorithm: in the first stage, the existing rules are ignored when calculating the heuristics, and next we use the results of the first stage as ``seeds" for the second stage, in which the existing rules are considered in order to calculate the MDL-based score for evaulating the rules. However, we noticed that the first stage can output rules of which the number of covered instances is too small to be further refined when incorporating its overlaps with existing rules in the second stage. Therefore, we now combine these two stages by always keeping two ``beams" in the beam search, with one beam using the heuristic score that ignores the existing rules and the other incorporating the existing rules. 

Third, given the necessity to evaluate the ``potential" for the instances that are not covered by any rule so far during the rule set learning process, which is closely related to the claim by~\citet{furnkranz2005roc} that evaluating incomplete rule sets are a challenging and unresolved issue in rule learning in general, in our previous work we proposed to use a surrogate CART decision tree model to assess the potential for the uncovered instances. However, this approach turned out to be not very stable for this purpose in general, as we cannot afford the computational time for tuning the regularization parameter for the post-pruning for CART; in addition, when the dataset is very imbalanced, the performance of CART is sub-optimal and hence does not provide a satisfactory assessment. To resolve this issue, in this paper we introduce a local constraint based on the local MDL compression gain, as discussed in Section~\ref{subsec:local_local testing}.

Besides the algorithmic improvements, we substantially extended the experiments for the purpose of studying the truly unordered rules in detail. That is, the purpose of the experiments in our previous paper was to show that, with the (soft) constraints of only allowing rules with similar outputs to overlap, truly unordered rule sets can achieve on-par predictive performance in comparison to rule sets methods that adopt ad-hoc schemes for conflicts caused by overlaps. However, in the current paper, we aim for studying 1) the predictive performance on a large scale of datasets, 2) whether the induced rules from data can be empirically regarded as truly unordered, in the sense that how large is the effect if we randomly pick one rule for predicting an instance covered by multiple rules, 3) whether the probabilistic estimates of individual rules can generalize to unseen (test) instances, such that the individual rules can be used as reliable and trustworthy explanations to the predictions, and 4) whether our rule sets need to sacrifice model complexity for being ``truly unordered", given that our search space is essentially much more complicated in comparison to i) non-probabilistic rules, ii) rules for binary targets only, and iii) methods with the separate-and-conquer strategy that simplifies the search space by iteratively removing covered instances. 

Moreover, we also made a moderate modification to our optimization score. If we simply regard the (vanilla definition of) MDL-based model selection criterion as a score based on the penalized maximum likelihood, the penalty consists of two terms: 1) the code length of model and 2) the regret. However, it is well-known that, firstly by the implementation of C4.5 rules~\citep{quinlan2014c4}, the code length of the model (the first term in the penalty) does not consider the redundancy in the model class of all possible rule sets, which can cause under-fitting. Specifically, during the implementation of our previous work, we simply exclude this ``code length of the model" term, since we noticed that when not including this term, the predictive performance is in general better (at the cost of higher model complexity though). However, with the improved algorithm we propose in this paper, we can now include the code length of model term for obtaining simpler models without sacrificing predictive performance. 

Finally, we now formally defined TURS as a probabilistic model, while the previous paper was not very precise in this regard. Also, we unified the nested overlap (i.e., one rule fully cover the other rule) and non-nested overlap of rules in the previous paper, without using separate modelling schemes for the two cases respectively. Empirically, checking whether an overlap is a nested overlap is computationally expensive, while the empirical results show that the final model learned from the data rarely contains such nested overlap. 
%
%
%
%
%
%
%

\vskip 0.2in
\bibliography{main}

\end{document}